\newcommand{\inner}[1]{\left\langle#1\right\rangle}
\def\R{\mathbb{R}}
\def\N{\mathbb{N}}
\newcommand{\norm}[1]{\left\|#1\right\|}
\newcommand{\abs}[1]{\left|#1\right|}
\def\ones{\mathbf{1}}
\def\assoc{\mathrm{assoc}}
\def\cut{\mathrm{cut}}
\def\vol{\mathop{\rm vol}\nolimits}
\def\argmin{\mathop{\rm arg\,min}\limits}
\def\subj{\mathop{\rm subject\,to}}
\def\ones{\mathbf{1}}
\def\cut{\mathrm{cut}}
\def\Ncut{\mathrm{NCut}}
\def\NCC{\mathrm{NCC}}
\def\Ct{C}
\newtheorem{theorem}{Theorem}
\newtheorem{lemma}{Lemma}
\newtheorem{definition}{Definition}
\newtheorem{proposition}{Proposition}
\newenvironment{proof}{\par\noindent{\bf Proof:\ }}{\hfill$\Box$\\[2mm]}
\newenvironment{proofa}[1]{\par\noindent{\bf Proof #1:\ }}{\hfill$\Box$\\[2mm]}
\newtheorem{thma}{Theorem}
\icmltitlerunning{Constrained fractional set programs}
\begin{document} 

\twocolumn[
\icmltitle{Constrained fractional set programs and their  \\ 
application in local clustering and community detection}

\icmlauthor{Thomas B\"uhler}{tb@cs.uni-saarland.de}
\icmladdress{Saarland University, Saarbr\"ucken, Germany}
\icmlauthor{Syama Sundar Rangapuram}{srangapu@mpi-inf.mpg.de}
\icmladdress{Max Planck Institute for Informatics \& Saarland University, Saarbr\"ucken, Germany}
\icmlauthor{Simon Setzer}{setzer@mia.uni-saarland.de}
\icmlauthor{Matthias Hein}{hein@cs.uni-saarland.de}
\icmladdress{Saarland University, Saarbr\"ucken, Germany}

\icmlkeywords{local clustering, community detection, network analysis, constrained fractional set programming, tight relaxation}

\vskip 0.3in
]

\begin{abstract} 
The (constrained) minimization of a ratio of set functions is a problem frequently occurring
in clustering and community detection. As these optimization problems are typically NP-hard, one
uses convex or spectral relaxations in practice. While these relaxations can be solved globally optimally,
they are often too loose and thus lead to results far away from the optimum. In this paper we show that \emph{every}
constrained minimization problem of a ratio of non-negative set functions allows a \emph{tight} relaxation into an unconstrained continuous
optimization problem. 
This result leads to a flexible framework for solving constrained problems in network analysis. 
While a globally optimal solution for the resulting non-convex problem cannot be guaranteed,
we outperform the loose convex or spectral relaxations by a large margin 
on constrained
local clustering problems.
\end{abstract}


\section{Introduction}
Graph-based data appear in manifold ways in learning problems - either the data have already graph structure as
in the case of 
social networks and biological networks 
or a similarity graph is constructed using a similarity measure based
on features of the data. 
Several graph-based problems in clustering and community detection can be modelled as the optimization
 of a ratio of set functions (referred to here as fractional set program).
Prominent examples are the normalized cut problem, from which the popular spectral clustering
method is derived \cite{SM00}, and the maximum density subgraph problem, which has applications in community detection
\cite{For2010} and
bioinformatics \cite{Sah2010}.

It turns out that in practice often 
 additional background or domain knowledge about the learning problem is available. Such
prior knowledge can then be incorporated as constraints into the optimization problem.
In the case of clustering, 
\citet{Wag01} are the first to show how prior information given in the form
of must-link and cannot-link constraints between vertices
can be integrated into the $k$-means algorithm.
Recently, \citet{RanHei12} proposed a generalization of
the normalized cut problem that can handle 
must-link and cannot-link constraints. 
In the recent work of \citet{MahOreVis12}, locality constraints in the form of a seed set and volume constraint have been integrated into the normalized
cut formulation.
Furthermore,  \citet{KhuSah09} and \citet{Sah2010} considered size and distance constraints for the maximum density subgraph problem.

Since the above-mentioned combinatorial problems are NP-hard, the standard approach is to consider convex or
spectral relaxations which can be solved globally optimally in polynomial time.
Due to its practical efficiency the spectral relaxation is very popular in machine learning, e.g.\;spectral clustering \cite{HK91, SM00}.
However, 
it
is often quite loose and thus leads to a solution far away from the optimal one of the original problem.
Moreover, spectral-type 
relaxations \cite{MahOreVis12} fail to guarantee that the constraints which encode the 
prior knowledge
are satisfied.

In another line of work \cite{HeiBue10,SB10,HeiSet11,BreLauUmiBre2012}, it has been shown that \textit{tight continuous relaxations} exist for
all balanced graph cut problems and the normalized cut subject to must-link and cannot-link constraints \cite{RanHei12}. A tight relaxation means that the continuous and the combinatorial optimization problem are equivalent in the sense that 
the optimal values agree and the optimal solution of the combinatorial problem can be obtained from the continuous solution.
While the resulting algorithms provide no guarantee to yield the globally optimal solution,
the standard loose relaxations are outperformed by a large margin in practice.

In this paper we show
that \textit{any} constrained minimization problem of a ratio of non-negative set functions allows a tight
relaxation into a continuous optimization problem. This result together with our efficient minimization techniques enables the
easy integration of prior information in form of constraints into many problems in graph-based clustering and community detection.
While the general framework introduced in this paper is applicable to all problems discussed so far, we will focus on
two particular applications: local clustering by constrained balanced graph cuts, and community detection via constrained densest subgraph problems.
Compared to previous work, the algorithms developed in this paper are the first to guarantee that all given constraints are fulfilled by the 
obtained solution. Note that in principle our method could also be applied to a setting with soft or noisy constraints, however we will focus here on the case of hard constraints. In the experimental section we will show the superior performance compared to state of the art methods \cite{AndLan06,MahOreVis12}.


\section{Fractional set programs in clustering and community detection}\label{sec:setratio_examples}

In the following, $G=(V,W)$ denotes an undirected, weighted graph with a non-negative, symmetric weight matrix $W \in \R^{n \times n}$, where $n=\abs{V}$. 
Moreover, by assigning a non-negative weight $g_i$ to each vertex $i$, we can define the general volume of a subset $A\subset V$ as $\vol_g(A) = \sum_{i\in A}g_i$. As special cases, we obtain for $g_i=1$ the cardinality $|A|$ and for $g_i$ equal to the degree $d_i = \sum_{j\in V}w_{ij}$ the classical volume $\vol(A)=\vol_d(A)$. Furthermore, $\overline{A}=V \backslash A$ denotes the complement of $A$. 

The balanced graph cut problem is a well-known problem in computer science with applications ranging from parallel computing to image segmentation \cite{PotSimLio1990,SM00}. A very popular balanced graph cut criterion is the normalized cut\footnote{This is up to a constant factor the same as the usual definition, $\Ncut(C,\overline{C})=\cut(C,\overline{C})\big(\frac{1}{\vol_d(C)}+\frac{1}{\vol_d(\overline{C})}\big)$.},
\[
\Ncut(C,\overline{C}) = \frac{\cut(C,\overline{C})}{\vol_d(C)\vol_d(\overline{C})} ,  \; \textrm{ for } C \subset V,
\]
where 
$\cut(C,\overline{C}):=\sum_{i\in C,j\in \overline{C}} w_{ij}$.
The spectral relaxation of the normalized cut leads to the popular spectral clustering method \cite{Lux07}. A related criterion 
is the normalized Cheeger cut,
\[
\NCC(C,\overline{C}) = \frac{\cut(C,\overline{C})}{\min\{\vol_d(C),\vol_d(\overline{C})\}}  ,  \; \textrm{ for } C \subset V.
\] 
More general balanced graph cuts were studied by \citet{HeiSet11}. 
In practice, often additional information about the desired solution is available which can be incorporated into the problem via constraints. 
This motivates us to consider a more general class of problems where one optimizes a ratio of set functions\footnote{A set function $\widehat{S}$ on a set $V$
is a function $\widehat{S}:2^V \rightarrow \R$.} subject to constraints.  
In the following, we discuss two examples of constrained problems in network analysis.

\paragraph{Constrained balanced graph cuts for local clustering.}
Recently, there has been a strong interest in balanced graph cut methods for local clustering. 
Starting with the work of \citet{SpiTen04}, initially, the goal was to develop an \textit{algorithm} that finds a subset near a given seed vertex with \textit{small} normalized cut or normalized Cheeger cut value with running time linear in the size of the obtained cluster. The proposed algorithm and subsequent work \cite{AndChuLan06, Chu09} 
 use random walks to explore the graph locally, without considering the whole graph.
Algorithms of this type have been applied for community detection in networks \cite{AndLan06}. 

In contrast, \citet{MahOreVis12} give up the runtime requirement and formulate the task as an explicit optimization problem, where one aims at finding the \textit{optimal} normalized cut subject to a seed constraint and an upper bound on the volume of the set containing the seed set. Again, the idea is to find
a local cluster around a given seed set.
Motivated by the standard spectral relaxation of the normalized cut problem, they derive a spectral-type relaxation which is biased towards solutions fulfilling the seed constraint. Their method has been successfully applied in semi-supervised image segmentation \cite{MajVisMal11} and for community detection around a given query set \citep{MahOreVis12}. However, while they provide an approximation guarantee for their relaxation, 
they cannot  guarantee that the returned solution satisfies seed and volume constraints.

In this paper we consider an extended version of the problem of \citet{MahOreVis12}. Let $J$ denote the set of seed vertices, $\widehat{S}$ a symmetric balancing function (e.g.\;$\widehat{S}(C)= \vol_d(C)\vol_d(\overline{C})$ for the normalized cut) and let $\vol_g(C)$ be the general volume of set $C$, where $g \in \R^n_+$ are vertex weights. 
The general local clustering problem can then be formulated as
\begin{align}\label{eq:local_clustering_general}
		\min_{C \subset V}&\;\frac{\cut(C,\overline{C})}{\widehat{S}(C)} \\
	 \subj: &\;\vol_g(C) \leq k, \; \textrm{ and }\; \notag
	 J \subset C.
\end{align}
The choice of the balancing
function $\widehat{S}$ allows the user to influence the trade-off between  getting 
a partition with small cut and a balanced
partition. 
One could also combine this with must- and cannot-link constraints (see \citealp{RanHei12}) or add even more complex constraints
such as an upper bound on the diameter of $C$. However, in order to compare to the
method of \citet{MahOreVis12},
we restrict ourselves in this paper to the normalized cut
with volume constraints,
that is $\widehat{S}(C)=\vol_d(C)\vol_d(\overline{C})$ and $g=d$.

\paragraph{Constrained local community detection.} 
A second related problem is constrained local community detection. In community detection it makes more sense to find a highly connected set instead of emphasizing the separation to the remaining part
of the graph by minimizing the cut. Thus, we are searching for a set $C$ which has high association, defined as $\assoc(C) = \sum_{i,j \in C} w_{ij}$. 
Dividing the association of $C$ by its size yields the density of $C$. The subgraph of maximum density can be computed in polynomial time \cite{Gol84}.
However, the obtained communities in the unconstrained problem are typically either too large or too small, which calls for size constraints. Note that the introduction of such constraints makes the problem NP-hard \cite{KhuSah09}.

A general class of (local) community detection problems can thus be formulated as 
\begin{align} \label{eq:maxdens_general}
   \max_{C \subset V} &\ \frac{\assoc(C)}{\vol_g(C)}\\ 
  \subj:  &\,k_1 \leq \vol_h(C) \le k_2,  \; \textrm{ and }\;J \subset C,	\nonumber
  \end{align}
  where $g,h \in \R^n_+$ are vertex weights. This formulation generalizes the above-mentioned density-based approaches by replacing the denominator by a general volume function $\vol_g$.
 One can use the vertex weights $g$ to
bias the obtained community towards one with desired properties by assigning small weights to vertices which one would prefer to occur in the solution
and larger weights to ones which are less preferred.

The problem \eqref{eq:maxdens_general}
with only lower bound constraints has been considered in
team selection \cite{GajSar2012} and bioinformatics \cite{Sah2010} where constant factor approximation algorithms were developed.
However, in the case of equality and upper bound constraints
the problem is very hard even 
when using only 
cardinality constraints (i.e., $h_i=1$),
 and it has been shown that there is no polynomial time approximation scheme in these cases \cite{Kho06, KhuSah09}.
Our method can handle such hard upper bound and equality constraints. In the experiments we show results 
for a community detection
problem with a specified query set $J$ and an upper bound on the size for a co-author network. 

Note that if $\vol_g(C)=\vol_d(C)$, one can decompose the objective of \eqref{eq:maxdens_general} analogously to the argument for the normalized cut \cite{SM00} as
  \[
  			\frac{\assoc(C)}{\vol_d(C)} 
  			= 1 -   \frac{\cut(C,\overline{C})}{\vol_d(C)}
  			\ .
  \] 
This implies that for $\vol_g(C)=\vol_d(C)$ in \eqref{eq:maxdens_general} and $\widehat{S}(C)=\vol_d(C)$ in \eqref{eq:local_clustering_general}, the problem \eqref{eq:maxdens_general} is equivalent to \eqref{eq:local_clustering_general} if we choose the same constraints. If one has only the constraint
$\vol_d(C) \leq \frac{1}{2}\vol_d(V)$ both problems are equivalent to the normalized Cheeger cut. 

\paragraph{Contributions of this paper.} 
We show that \textbf{all} constrained non-negative fractional set programs have an equivalent tight continuous relaxation. This general result enables the integration of prior information in form of constraints
into clustering and community detection problems. In particular, it allows us to derive efficient algorithms for problems \eqref{eq:local_clustering_general} and \eqref{eq:maxdens_general}.
  Our algorithms consistently outperform competing methods \cite{AndLan06,MahOreVis12}.
Moreover, we are not aware of any other methods for the above problems which can guarantee that the solution always satisfies volume \textbf{and} seed constraints.

Although the tight relaxation results in \citet{HeiSet11} and \citet{RanHei12}
encompass a large class of problems, they are not applicable to the problems considered in this paper
because of the following limitations:
First, tight relaxations were shown by \citet{HeiSet11} only for a ratio of \textit{symmetric} non-negative set functions, where the numerator is restricted to be \textit{submodular}.
We extend the results to \emph{arbitrary} ratios of non-negative set functions without any restrictions concerning symmetry or submodularity.
Second, only \emph{equality} constraints for \emph{non-negative} set functions restricted to be either \emph{submodular or supermodular} could be handled by \citet{RanHei12}.
       We generalize this to \emph{inequality} constraints\footnote{Note that $\widehat{M}(C)=k$ is equivalent to $k\leq\widehat{M}(C)\leq k$.}
      without any restrictions on the constraint set functions in order to handle the constraints in \eqref{eq:local_clustering_general} and \eqref{eq:maxdens_general}.


\section{Tight relaxations of fractional set programs with constraints}\label{sec:relax}

The problems discussed in the last section can be written in the following general form:
\begin{align}\label{eq:set_ratio}
   \min_{C \subset V} &\;\frac{\widehat{R}(C)}{\widehat{S}(C)} =: \widehat{Q}(C)\\
  \subj:  &\;\widehat{M}_i(C) \le k_i, \quad i=1,\ldots,K \nonumber
  \end{align}
where $\widehat{R},\widehat{S},\widehat{M}_i:2^V \rightarrow \R$ are set functions on a set $V= \left\{ 1,\dots,n\right\}$. We assume here that $\widehat{R},\widehat{S}$ are non-negative and that $\widehat{R}(\emptyset)=\widehat{S}(\emptyset)=0$. 
No assumptions are made on the set functions $\widehat{M}_i$, in particular they are not required to be non-negative. Thus also lower bound constraints can be written in the above form.
Moreover, the formulation in \eqref{eq:set_ratio} also encompasses the subset constraint $J\subset C$ in \eqref{eq:local_clustering_general} and \eqref{eq:maxdens_general} as it can be written as  equality constraint $\abs{J} - \abs{J \cap C} = 0$. Alternatively, we will discuss a direct integration of the subset constraint into the objective in Section \ref{sec:max-density}.

The connection between the set-valued and the continuous space is achieved via thresholding. 
Let $f\in \R^n$, and we assume wlog that $f$ is ordered 
in ascending order 
$f_1\leq f_2\leq\cdots\leq f_n$. One defines the sets
\begin{equation}\label{eq:sets}
	\Ct_i:= \left\{ j \in V | f_j \geq f_i\right\} , \hspace{1cm}  i=1, \dots ,n.
\end{equation}
	We frequently make use of this notation in the following. Furthermore, we use $\ones_C \in \R^n$ to denote the indicator vector of the set $C$, i.e.\;the vector which is $1$ at entry $j$ if $j \in C$ and $0$ otherwise.
A key tool for the derivation of the results of this paper is the Lovasz extension as a way to extend a set function (seen as function on the hypercube) to a function on $\R^n$. 
\begin{definition} \label{def:lovasz}
Let $\widehat{R}:2^V \rightarrow \R$ be a set function with $\widehat{R}(\emptyset)=0$, and $f \in \R^n$ 
in ascending order 
$f_1\leq f_2\leq\cdots\leq f_n$. 
The Lovasz extension $R:\R^n \rightarrow \R$ of $\widehat{R}$ 
is
 defined as
	$R(f) 
	 = \sum_{i=1}^{n-1} \widehat{R}(\Ct_{i+1}) \left(f_{i+1}-f_i\right) + \widehat{R}(V) f_1.$
\end{definition}
Note that $R(\ones_C)=\widehat{R}(C)$ for all $C \subset V$, i.e.\;$R$ is indeed an extension of $\widehat{R}$ from $2^V$ to $\R^n$. In the following, we always use the hat-symbol $(\;\widehat{}\;)$ to denote set functions and omit it for the corresponding Lovasz extension.
A particular important class of set functions are submodular set functions since their Lovasz extension is convex \cite{Bach11}.
\begin{definition}
A set function $\widehat{R}:2^V \rightarrow \R$ is submodular if for all $A,B \subset V$,
$	\widehat{R}(A\cup B) + \widehat{R}(A \cap B)  \leq \widehat{R}(A) + \widehat{R}(B)$. 
It is supermodular, if the converse inequality holds true, 
and modular if we have equality.
\end{definition}

The connection between submodular set functions and convex functions is as follows (see \citealp{Bach11}).
\begin{proposition}\label{prop:convexity_submodularity}
Let $R:\R^V \rightarrow \R$ be the Lovasz extension of $\widehat{R}:2^V \rightarrow \R$. Then, $\widehat{R}$ is submodular if and only if $R$ is convex.
Furthermore, if $\widehat{R}$ is submodular, then $\min_{A\subset V} \widehat{R}(A) = \min_{f\in \left[0,1 \right]^n} R(f)$.
\end{proposition}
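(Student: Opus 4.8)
The plan is to prove the two assertions separately, using only elementary properties of the Lovász extension. As a preliminary, observe directly from Definition~\ref{def:lovasz} that $R$ is positively homogeneous ($R(tf)=tR(f)$ for $t\ge0$, since rescaling $f$ by $t>0$ leaves its ordering and the sets $C_i$ unchanged, and $R(0)=0$) and that $R(\ones_C)=\widehat{R}(C)$. Since a positively homogeneous function is convex if and only if it is subadditive, the equivalence reduces to showing that $\widehat{R}$ is submodular iff $R(f+g)\le R(f)+R(g)$ for all $f,g\in\R^n$. For the direction ``$R$ convex $\Rightarrow$ $\widehat{R}$ submodular'' I would evaluate subadditivity at $f=\ones_A$, $g=\ones_B$: using the pointwise identity $\ones_A+\ones_B=\ones_{A\cup B}+\ones_{A\cap B}$ and computing the Lovász extension of the right-hand side straight from the definition (its two nontrivial superlevel sets are $A\cup B$ and $A\cap B$) gives $R(\ones_A+\ones_B)=\widehat{R}(A\cup B)+\widehat{R}(A\cap B)$, hence $\widehat{R}(A\cup B)+\widehat{R}(A\cap B)\le\widehat{R}(A)+\widehat{R}(B)$.

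For the converse direction I would use the greedy/polyhedral representation of $R$. Given $f$ with $f_1\le\cdots\le f_n$ and the associated chain $V=C_1\supseteq C_2\supseteq\cdots\supseteq C_n$, define $w\in\R^n$ by $w_i=\widehat{R}(C_i)-\widehat{R}(C_{i+1})$ with the convention $C_{n+1}=\emptyset$. A summation by parts shows $\inner{w,f}=R(f)$ for \emph{every} set function with $\widehat{R}(\emptyset)=0$, and $\sum_i w_i=\widehat{R}(V)$ by telescoping. \textbf{The main obstacle} is the classical fact that when $\widehat{R}$ is submodular, $w$ lies in the base polytope $B(\widehat{R}):=\{v\in\R^n:\inner{v,\ones_V}=\widehat{R}(V),\ \inner{v,\ones_A}\le\widehat{R}(A)\text{ for all }A\subset V\}$; I would establish $\inner{w,\ones_A}\le\widehat{R}(A)$ by induction on $|A|$, comparing $A$ with the chain members $C_i$ and invoking submodularity. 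Conversely, for arbitrary $v\in B(\widehat{R})$, writing $f_i=f_1+\sum_{j<i}(f_{j+1}-f_j)$ and regrouping puts the non-negative weights $f_{j+1}-f_j$ in front of the terms $\inner{v,\ones_{C_{j+1}}}\le\widehat{R}(C_{j+1})$, together with the exact term $f_1\widehat{R}(V)$, so $\inner{v,f}\le R(f)$. Therefore $R(f)=\max_{v\in B(\widehat{R})}\inner{v,f}$ is a supremum of linear functionals, hence convex.

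Finally, for $\min_{f\in[0,1]^n}R(f)=\min_{A\subset V}\widehat{R}(A)$: the inequality ``$\le$'' is immediate since $\ones_A\in[0,1]^n$ and $R(\ones_A)=\widehat{R}(A)$ (the minimum over the compact cube is attained because $R$ is continuous). For ``$\ge$'', fix $f\in[0,1]^n$ with $f_1\le\cdots\le f_n$; the coefficients $f_1$ and $f_{i+1}-f_i$ appearing in Definition~\ref{def:lovasz} are non-negative and sum to $f_n\le1$, so adding the term $(1-f_n)\widehat{R}(\emptyset)=0$ exhibits $R(f)$ as a convex combination of values $\widehat{R}(C)$ over subsets $C$, giving $R(f)\ge\min_{A\subset V}\widehat{R}(A)$. (This last equality in fact holds for any set function with $\widehat{R}(\emptyset)=0$; submodularity is what additionally makes the relaxed problem convex and hence tractable.)
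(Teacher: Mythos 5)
The paper never proves this proposition --- it is quoted as a known fact with a pointer to Bach (2011) --- so there is no in-paper argument to measure you against; what you have written is the classical Lov\'asz/Edmonds proof, and its architecture is sound. Reducing convexity to subadditivity via positive homogeneity, the identity $R(\ones_A+\ones_B)=\widehat{R}(A\cup B)+\widehat{R}(A\cap B)$ (which is correct, including the degenerate cases $A\cap B=\emptyset$ and $A\cup B=V$, thanks to $\widehat{R}(\emptyset)=0$), the support-function representation $R(f)=\max_{v\in B(\widehat{R})}\inner{v,f}$ for the converse, and the convex-combination argument for the cube (where your remark that submodularity is not needed is correct) are all the standard route. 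Your Abel-summation step showing $\inner{v,f}\le R(f)$ for $v\in B(\widehat{R})$ is essentially the same computation the paper performs in the proof of its Lemma~\ref{lem:setfunct}.

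Two points need repair before this is a complete proof. First, the substantive one: all of the difficulty is concentrated in the claim that the greedy vector lies in $B(\widehat{R})$, and you only announce that you ``would establish'' it by induction on $|A|$. That induction is exactly where submodularity does its work, and it is not a one-line verification; left as a promise, the hard direction of the equivalence is unproved. Second, a genuine flaw in the construction as stated: you define $w_i=\widehat{R}(C_i)-\widehat{R}(C_{i+1})$ from the superlevel sets $C_i$, but these coincide when $f$ has repeated entries, and then $w$ can fail to lie in $B(\widehat{R})$. Concretely, take $n=2$, $f=(0,0)$ and the modular (hence submodular) function $\widehat{R}(A)=|A|$: then $C_1=C_2=V$, so $w=(0,2)$, which violates $w_2\le\widehat{R}(\{2\})=1$. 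The standard fix is to refine the level sets to a strict chain $\emptyset=D_0\subset D_1\subset\cdots\subset D_n=V$ compatible with the ordering of $f$ and set the greedy vector from consecutive differences along that chain; the identity $\inner{w,f}=R(f)$ is preserved because the ambiguity only affects coordinates multiplied by zero increments. With that correction and the induction actually carried out, your argument is complete.
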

Thus submodular minimization problems reduce to convex minimization problems. A similar equivalence of continuous and combinatorial
optimization problems is the main topic of this paper. In the following we list some useful properties of the Lovasz extension (see \citealp{Fuj2005,Bach11,HeiSet11}).

\begin{proposition}\label{prop:lovasz_prop}
Let $R:\R^V \rightarrow \R$ be the Lovasz extension of $\widehat{R}:2^V \rightarrow \R$. Then,
\begin{itemize}
	\item $R$ is positively one-homogeneous\footnote{$R:\R^V \rightarrow \R$ is positively one-homogeneous if $R(\alpha f)=\alpha \,R(f),\; \forall \alpha \in \R$ with $\alpha\geq 0$.},
	\item $R(f)\geq 0, \; \forall\, f \in  \R^V$ 
	 and $R(\ones)=0$ if and only if $\widehat{R}(A)\geq 0, \; \forall \, A \subset V$ and $\widehat{R}(V)=0$,
	\item Let $S:\R^V \rightarrow \R$ be the Lovasz extension of $\widehat{S}:2^V \rightarrow \R$. Then, $\lambda_1\,R+\lambda_2\,S$ is the Lovasz extension of $\lambda_1\,\widehat{R} + \lambda_2\,\widehat{S}$, for all $\lambda_1,\lambda_2 \in \R$.
\end{itemize}
\end{proposition}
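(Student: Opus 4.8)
The plan is to verify each of the three items directly from the explicit formula in Definition~\ref{def:lovasz}, using the observation that for a fixed vector $f$ the threshold sets $\Ct_i$ from \eqref{eq:sets} depend only on the ordering of the coordinates of $f$ and not on the particular set function being extended.

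For positive one-homogeneity, fix $f$ with $f_1\le\cdots\le f_n$ and $\alpha\ge 0$. Multiplication by a non-negative scalar preserves the weak ordering of the coordinates, so $\alpha f$ is again in ascending order with the same sets $\Ct_i$; substituting $\alpha f_i$ for $f_i$ in the defining sum and pulling out $\alpha$ gives $R(\alpha f)=\alpha R(f)$ (and for $\alpha=0$ both sides vanish, since every coordinate difference is zero and $f_1=0$). Linearity in the set function is equally immediate: the Lovasz extension of $\lambda_1\widehat R+\lambda_2\widehat S$ is well-defined because $(\lambda_1\widehat R+\lambda_2\widehat S)(\emptyset)=0$, it uses the very same sets $\Ct_i$ in its formula, and since $(\lambda_1\widehat R+\lambda_2\widehat S)(C)=\lambda_1\widehat R(C)+\lambda_2\widehat S(C)$ for every $C\subset V$, regrouping the terms yields $\lambda_1 R(f)+\lambda_2 S(f)$.

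The second item is the one deserving a genuine argument, and the point to watch is that it quantifies over all $f\in\R^V$, including vectors with negative entries, so signs in the explicit formula must be tracked. For the ``if'' direction, assume $\widehat R(A)\ge 0$ for all $A\subset V$ and $\widehat R(V)=0$. Given any $f$ written in ascending order, every difference $f_{i+1}-f_i$ is non-negative, every coefficient $\widehat R(\Ct_{i+1})$ is non-negative, and the boundary term $\widehat R(V)f_1$ vanishes; hence $R(f)\ge 0$, and evaluating at $f=\ones$ (all differences zero) gives $R(\ones)=\widehat R(V)=0$. For the ``only if'' direction, assume $R(f)\ge 0$ for all $f$ and $R(\ones)=0$. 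Using the extension property $R(\ones_A)=\widehat R(A)$ noted after Definition~\ref{def:lovasz}, the choice $f=\ones_A$ gives $\widehat R(A)=R(\ones_A)\ge 0$ for every $A\subset V$, while $\widehat R(V)=R(\ones)=0$ follows from the formula at $f=\ones$ together with the hypothesis.

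I do not expect a real obstacle: everything reduces to bookkeeping in the definition. The only spot requiring attention is the ``if'' half of the second item, where one must invoke the ascending-order convention to ensure $f_{i+1}-f_i\ge 0$ and the hypothesis $\widehat R(V)=0$ to discard the boundary term --- without the latter, $R(-\ones)=-\widehat R(V)<0$ whenever $\widehat R(V)>0$, so the assumption $\widehat R(V)=0$ is genuinely needed and not merely convenient.
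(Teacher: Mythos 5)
Your proof is correct. The paper states Proposition~\ref{prop:lovasz_prop} without proof, citing standard references, and your direct verification from Definition~\ref{def:lovasz} --- exploiting that the threshold sets $\Ct_i$ depend only on the ordering of $f$ and not on the set function, so that homogeneity and linearity are immediate and the second item reduces to sign-tracking plus the extension property $R(\ones_A)=\widehat R(A)$ --- is exactly the standard argument; your remark that $R(-\ones)=-\widehat R(V)$ correctly pinpoints why the hypothesis $\widehat R(V)=0$ is indispensable for non-negativity on all of $\R^V$ rather than just on $\R^V_+$.
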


\paragraph{Unconstrained fractional set programs.}
Using the property of the Lovasz extension that  $R(\ones_C)=\widehat{R}(C)$ for all $C \subset V$, one can directly observe
that the following continuous fractional program is a relaxation of the unconstrained version of problem \eqref{eq:set_ratio}
\[ 
\inf_{f \in \R_+^n} \frac{R(f)}{S(f)}.
\]
The following theorem shows that the relaxation is in fact tight, in the sense that the optimal values agree and the solution of the set-valued problem can be computed from the solution of the continuous problem.

Note that given a vector $f\in \R^n$ for the continuous problem, one can construct a set $C'$ by computing
\[
	C' =  \argmin_{C_i,i=1,\ldots,n} \frac{\widehat{R}(\Ct_{i})} {\widehat{S}(\Ct_{i})},
\]
where the sets $\Ct_{i}$ are defined in \eqref{eq:sets}. We refer to this process as \textit{optimal thresholding}.

\begin{theorem}\label{thm:ratio_setfunct}
Let $\widehat{R},\widehat{S}:2^V \rightarrow \R$ be non-negative set functions and  $R,S:\R^n \rightarrow \R$ their Lovasz extensions, respectively. Then, it holds that
\[
	\inf_{C \subset V} \frac{\widehat{R}(C)}{\widehat{S}(C)} = \inf_{f \in \R_+^n} \frac{R(f)}{S(f)} \ .
\]
Moreover, it holds for all $f\in \R^n_+$, 
	$\frac{R(f)}{S(f)} \geq  \min_{i=1,\ldots,n} \frac{\widehat{R}(\Ct_{i})} {\widehat{S}(\Ct_{i})}$.
Thus a minimizer of the set ratio can be found by optimal thresholding.
Let furthermore $\widehat{R}(V)=\widehat{S}(V)=0$, then all the above statements hold if one replaces $\R^n_+$ with $\R^n$.
\end{theorem}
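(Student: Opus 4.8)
The plan is to prove the two directions of the equality separately, the harder being the "$\geq$" direction, which will follow from the pointwise inequality $\frac{R(f)}{S(f)} \geq \min_{i} \frac{\widehat{R}(C_i)}{\widehat{S}(C_i)}$ asserted in the statement. First I would handle the easy inclusion: since $R(\ones_C) = \widehat{R}(C)$ and $S(\ones_C) = \widehat{S}(C)$ for every $C \subset V$, and since $\ones_C \in \R^n_+$, the continuous infimum over $\R^n_+$ is at most the set infimum. (One should note the denominators: the set infimum ranges only over $C$ with $\widehat{S}(C) > 0$, and correspondingly one restricts $f$ to those with $S(f) > 0$; if no such $C$ exists the statement is vacuous.)

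For the reverse inequality, fix $f \in \R^n_+$ with $S(f) > 0$; WLOG $f_1 \leq \dots \leq f_n$, and set $f_0 := 0$, so that telescoping gives $f_i = \sum_{k=1}^{i}(f_k - f_{k-1})$ with all increments $\delta_k := f_k - f_{k-1} \geq 0$. The key step is to rewrite both Lovasz extensions as non-negative combinations of the \emph{same} coefficients $\delta_k$. Expanding Definition \ref{def:lovasz} and using $\widehat{R}(C_1) = \widehat{R}(V)$, one gets the clean form $R(f) = \sum_{k=1}^{n} \delta_k\, \widehat{R}(C_k)$ and likewise $S(f) = \sum_{k=1}^{n} \delta_k\, \widehat{S}(C_k)$. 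Then, writing $\mu := \min_{i} \widehat{R}(C_i)/\widehat{S}(C_i)$ where the minimum is over those $i$ with $\widehat{S}(C_i) > 0$, I would argue term by term: whenever $\widehat{S}(C_k) > 0$ we have $\widehat{R}(C_k) \geq \mu\, \widehat{S}(C_k)$; whenever $\widehat{S}(C_k) = 0$ we have $\widehat{R}(C_k) \geq 0 = \mu \cdot 0$ by non-negativity of $\widehat{R}$. Hence $R(f) \geq \mu\, S(f)$, and since $S(f) > 0$ and $\delta_k \geq 0$ this gives $R(f)/S(f) \geq \mu \geq \inf_{C} \widehat{R}(C)/\widehat{S}(C)$. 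Taking the infimum over $f$ closes the loop and simultaneously establishes that optimal thresholding of any $f$ produces a set achieving value at most $R(f)/S(f)$.

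The main obstacle — more a bookkeeping hazard than a deep difficulty — is the careful handling of the denominator constraint $\widehat{S}(C) > 0$, i.e.\ making sure the set $C_k$ with $\widehat{S}(C_k) = 0$ do not derail the term-by-term comparison and that $S(f) = \sum_k \delta_k \widehat{S}(C_k)$ being strictly positive forces at least one admissible $C_k$ to appear with positive weight; the sign assumptions $\widehat{R} \geq 0$, $\widehat{S} \geq 0$ are exactly what make this work, which is why they are hypotheses. For the final sentence of the theorem, when additionally $\widehat{R}(V) = \widehat{S}(V) = 0$, I would extend $f$ from $\R^n_+$ to all of $\R^n$ by exploiting that the Lovasz extension is invariant under adding a constant vector in this case: given arbitrary $f \in \R^n$, the shifted vector $f - f_1 \ones \in \R^n_+$ has the same ordered increments $\delta_k$ for $k \geq 2$ while the $\widehat{R}(V) f_1$ and $\widehat{S}(V) f_1$ terms vanish, so $R(f) = R(f - f_1\ones)$ and $S(f) = S(f - f_1\ones)$ and the previous argument applies verbatim; equivalently, by positive one-homogeneity and this shift-invariance one reduces the $\R^n$ case to the $\R^n_+$ case already proved.
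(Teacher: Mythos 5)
Your proof is correct and follows essentially the same route as the paper: the easy direction via $R(\ones_C)=\widehat{R}(C)$, and the hard direction via rewriting the Lovasz extension as a non-negative combination $\sum_k \delta_k \widehat{R}(\Ct_k)$ and pulling out the minimum ratio term by term (the paper's Lemma~\ref{lemma:thresholding}, which it states for the more general Theorem~\ref{thm:ratio_setfunct_supp}(b) with arbitrary convex one-homogeneous extensions, but which reduces to your argument here). Your explicit case split for $\widehat{S}(\Ct_k)=0$ and the shift-invariance argument for the $\R^n$ case are slightly more careful than the paper's write-up, but the substance is identical.
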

 In practice it may sometimes by difficult to derive and/or work with explicit forms of the Lovasz extensions of $\widehat{R}$ and $\widehat{S}$.
 However, the following more general version of Theorem \ref{thm:ratio_setfunct} shows that, given a decomposition of $\widehat{R}$ and $\widehat{S}$ into a difference of submodular set functions, 
 one needs
the Lovasz extension only for the
first 
term of $\widehat{R}$ and 
the second 
term of $\widehat{S}$.
The remaining terms can be replaced by any convex one-homogeneous functions that also extend the corresponding set functions.
Note that by Proposition \ref{thm:setfct_dc} such a decomposition always exists.

\begin{thma}[b]\label{thm:ratio_setfunct_supp}
Let $\widehat{R},\widehat{S}:2^V \rightarrow \R$ be non-negative set functions and $\widehat{R}:=\widehat{R}_1-\widehat{R}_2$ and $\widehat{S}:=\widehat{S}_1-\widehat{S}_2$  be decompositions into differences of submodular set functions. 
Let the Lovasz extensions of $\widehat{R}_1,\widehat{S}_2$ be given by $R_1,S_2$ and let $R'_2, S'_1$ be positively one-homogeneous convex functions with $S'_1(\ones_A)=\widehat{S}_1(A)$ and $R'_2(\ones_A)=\widehat{R}_2(A)$ such that $S'_1 -S_2$ is non-negative. Define $R:=R_1-R'_2$ and 
$S:=S'_1-S_2$. Then, 
\[
	\inf_{C \subset V} \frac{\widehat{R}(C)}{\widehat{S}(C)} = \inf_{f \in \R_+^V} \frac{R(f)}{S(f)} \ .
\]
Moreover, it holds for all $f\in \R^n_+$, 
	$
	\frac{R(f)}{S(f)} \geq  \min_{i=1,\ldots,n} \frac{\widehat{R}(\Ct_{i})} {\widehat{S}(\Ct_{i})}
	$.
Thus a minimizer of the set ratio can be found by optimal thresholding.
Let furthermore $\widehat{R}(V)=\widehat{S}(V)=0$, then all the above statements hold if one replaces $\R^n_+$ with $\R^n$.
\end{thma}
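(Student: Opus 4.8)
The plan is to derive the statement from Theorem~\ref{thm:ratio_setfunct} by sandwiching the ``mixed'' functions $R=R_1-R'_2$ and $S=S'_1-S_2$ between the genuine Lovász extensions of $\widehat R$ and $\widehat S$ on the nonnegative orthant. Write $R^{\flat}$ and $S^{\flat}$ for the Lovász extensions of $\widehat R=\widehat R_1-\widehat R_2$ and $\widehat S=\widehat S_1-\widehat S_2$; by the linearity property in Proposition~\ref{prop:lovasz_prop} these equal $R^{\flat}=R_1-R_2$ and $S^{\flat}=S_1-S_2$, where $R_2$ and $S_1$ denote the Lovász extensions of $\widehat R_2$ and $\widehat S_1$. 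First I would record that $R$ and $S$ are positively one-homogeneous and still extend $\widehat R$ and $\widehat S$: since $R_1$ is a Lovász extension and $R'_2(\ones_A)=\widehat R_2(A)$ by assumption, $R(\ones_C)=\widehat R_1(C)-\widehat R_2(C)=\widehat R(C)$, and similarly $S(\ones_C)=\widehat S(C)$. Evaluating the continuous ratio at indicator vectors then already gives $\inf_{f\in\R^n_+}\frac{R(f)}{S(f)}\le\inf_{C\subset V}\frac{\widehat R(C)}{\widehat S(C)}$, so the whole theorem reduces to the pointwise bound $\frac{R(f)}{S(f)}\ge\min_{i}\frac{\widehat R(\Ct_i)}{\widehat S(\Ct_i)}$ for $f\in\R^n_+$.

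The key step is a sublinearity comparison. Fix $f\in\R^n_+$ sorted in ascending order, set $f_0:=0$, and use the layer-cake identity $f=\sum_{i=1}^n(f_i-f_{i-1})\,\ones_{\Ct_i}$, whose coefficients are all nonnegative and in which $\Ct_1=V$. Since $R'_2$ is convex and positively one-homogeneous it is sublinear, so $R'_2(f)\le\sum_{i=1}^n(f_i-f_{i-1})\,R'_2(\ones_{\Ct_i})=\sum_{i=1}^n(f_i-f_{i-1})\,\widehat R_2(\Ct_i)=R_2(f)$, and the same computation yields $S'_1(f)\le S_1(f)$. Hence $R(f)=R_1(f)-R'_2(f)\ge R_1(f)-R_2(f)=R^{\flat}(f)$ and $0\le S(f)=S'_1(f)-S_2(f)\le S_1(f)-S_2(f)=S^{\flat}(f)$, where the lower bound on $S$ is exactly the hypothesis that $S'_1-S_2$ is nonnegative, and $R^{\flat}(f)\ge0$ because $\widehat R$ is nonnegative and, for $f\in\R^n_+$, the Lovász extension is a nonnegative combination of the values $\widehat R(\Ct_i)$. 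If $S(f)=0$ there is nothing to prove; otherwise, combining these inequalities with Theorem~\ref{thm:ratio_setfunct} applied to the genuine Lovász extensions $R^{\flat},S^{\flat}$ gives
\[
\frac{R(f)}{S(f)}\ \ge\ \frac{R^{\flat}(f)}{S(f)}\ \ge\ \frac{R^{\flat}(f)}{S^{\flat}(f)}\ \ge\ \min_{i=1,\dots,n}\frac{\widehat R(\Ct_i)}{\widehat S(\Ct_i)}.
\]

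This is exactly the ``moreover'' inequality, and it also delivers the optimal-thresholding claim; together with the reverse inequality from the first paragraph it closes the chain $\inf_{C\subset V}\frac{\widehat R(C)}{\widehat S(C)}\ge\inf_{f\in\R^n_+}\frac{R(f)}{S(f)}\ge\inf_{f\in\R^n_+}\min_{i}\frac{\widehat R(\Ct_i)}{\widehat S(\Ct_i)}\ge\inf_{C\subset V}\frac{\widehat R(C)}{\widehat S(C)}$, so all three infima agree. For the last sentence, when $\widehat R(V)=\widehat S(V)=0$ I would reduce an arbitrary $f\in\R^n$ to $\tilde f:=f-f_1\ones\in\R^n_+$, which has the same threshold sets $\Ct_i$, and argue that the ratio is invariant under this constant shift as in Theorem~\ref{thm:ratio_setfunct}: for the Lovász-extension parts this is the identity $R_1(g+t\ones)=R_1(g)+t\,\widehat R_1(V)$ (and likewise for $S_2$), and $\widehat R(V)=\widehat S(V)=0$ makes the added constants cancel. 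The main obstacle I anticipate is precisely this last point --- verifying that the shift also passes through the convex one-homogeneous terms $R'_2$ and $S'_1$ (it does for the natural choices, e.g.\ when these are themselves taken to be Lovász extensions of the remaining summands) --- whereas the sublinearity comparison that powers the $\R^n_+$ case is routine once the layer-cake identity is set up.
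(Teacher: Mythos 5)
Your argument for the $\R^n_+$ case is correct, and it reaches the same two key inequalities as the paper ($R'_2\le R_2$ and $S'_1\le S_1$ on the positive orthant, hence $R\ge R_1-R_2$ and $0\le S\le S_1-S_2$) by a genuinely different and arguably cleaner route. The paper's Lemma~\ref{lem:setfunct} proves the comparison by writing the convex one-homogeneous extension as a support function $R'(f)=\sup_{u\in U}\inner{u,f}$ and Abel-summing the Lovász formula to recover $\inner{u,f}$ term by term; you instead apply sublinearity of $R'_2$ directly to the layer-cake representation $f=\sum_{i=1}^n(f_i-f_{i-1})\ones_{\Ct_i}$, whose coefficients are nonnegative precisely because $f\in\R^n_+$ and $f$ is sorted, and observe that the resulting upper bound \emph{is} the Lovász extension. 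This avoids convex duality entirely and makes transparent that submodularity of $\widehat{R}_2,\widehat{S}_1$ is never used in this step (the paper's proof also does not use it, but that is less visible there). A second organizational difference: the paper re-derives the thresholding bound from scratch inside Lemma~\ref{lemma:thresholding}, whereas you reduce to Theorem~\ref{thm:ratio_setfunct} applied to the genuine Lovász extensions $R_1-R_2$ and $S_1-S_2$. Since the paper obtains Theorem~\ref{thm:ratio_setfunct} as the special case $R'_2=R_2$, $S'_1=S_1$ of part (b), you should state that the ``Moreover'' inequality of Theorem~\ref{thm:ratio_setfunct} has a direct proof (it is exactly the middle chain of your and the paper's computation, needing no comparison lemma), so that no circularity arises; with that remark your chain $\frac{R(f)}{S(f)}\ge\frac{R^{\flat}(f)}{S(f)}\ge\frac{R^{\flat}(f)}{S^{\flat}(f)}\ge\min_i\frac{\widehat{R}(\Ct_i)}{\widehat{S}(\Ct_i)}$ and the closing sandwich of infima are complete.

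The one genuine gap is the final sentence of the theorem (the $\R^n$ case when $\widehat{R}(V)=\widehat{S}(V)=0$), and you have correctly identified it: the shift $f\mapsto f-f_1\ones$ passes through Lovász extensions via $R_1(g+t\ones)=R_1(g)+t\widehat{R}_1(V)$, but a general positively one-homogeneous convex $R'_2$ with $R'_2(\ones_A)=\widehat{R}_2(A)$ need not satisfy $R'_2(g+t\ones)=R'_2(g)+t\widehat{R}_2(V)$ for $t<0$ (already for $n=1$, $\widehat{R}\equiv 0$ admits the extension $R'(f)=\max\{0,-f\}$, which is not bounded above by the Lovász extension on all of $\R$). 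So the extension to $\R^n$ requires an additional hypothesis on $R'_2,S'_1$ (e.g.\ invariance of the subtracted support set under the shift, which holds when they are themselves Lovász extensions). Be aware that the paper is no more careful here --- it disposes of this case with ``is shown analogously'' --- so your explicit flagging of the obstacle is a point in your favour rather than a defect relative to the published argument.
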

Before we prove the above Theorem, we collect some useful results. Lemma \ref{lem:setfunct} shows that the Lovasz extension of a submodular set function $\widehat{R}$ is an upper bound on any one-homogeneous convex function $R'$ which extends the set function $\widehat{R}$ to the continuous space.
\begin{lemma}\label{lem:setfunct}
Let $\widehat{R}:2^V \rightarrow \R$ be a submodular set function with $\widehat{R}(\emptyset)=0$. 
Let $R'$ be a positively one-homogeneous convex function with $R'(\ones_A)=\widehat{R}(A)$ for all $A \subset V$. 
Then, it holds $\forall f\in \R_+^V$ that 
\[
	R'(f) \leq \sum_{i=1}^{n-1} \widehat{R}(\Ct_{i+1}) \left( f_{i+1}-f_i\right) + f_1 \widehat{R}(V).
\]
Let furthermore $\widehat{R}(V)=0$, then the above inequality holds for all $f\in \R^V$.
\end{lemma}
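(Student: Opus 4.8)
The plan is to exploit the fact that the right-hand side is precisely the Lovász extension $R(f)$ of the submodular set function $\widehat{R}$, and that $R$ admits the variational characterization as a support function of the base polytope of $\widehat{R}$. Concretely, for a submodular $\widehat{R}$ with $\widehat{R}(\emptyset)=0$, one has $R(f) = \max_{w \in B(\widehat{R})} \inner{w, f}$, where $B(\widehat{R})$ is the base polytope $\{ w \in \R^n : \inner{w, \ones_A} \le \widehat{R}(A) \ \forall A \subset V, \ \inner{w,\ones_V} = \widehat{R}(V)\}$. Thus it suffices to show that the given one-homogeneous convex $R'$ satisfies $R'(f) \le \max_{w \in B(\widehat{R})} \inner{w,f}$ for all $f \in \R_+^V$. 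Since both sides are positively one-homogeneous and convex, and every $f \in \R_+^V$ can be written as a nonnegative combination of the indicator vectors $\ones_{\Ct_i}$ (via the "layer-cake" decomposition $f = f_1 \ones_V + \sum_{i=1}^{n-1}(f_{i+1}-f_i)\ones_{\Ct_{i+1}}$ when $f$ is sorted ascending), one can try to push $R'$ through this decomposition using convexity/subadditivity.

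First I would record the layer-cake identity: for $f$ sorted ascending with $f_1 \ge 0$, $f = \sum_{i=1}^{n-1}(f_{i+1}-f_i)\ones_{\Ct_{i+1}} + f_1 \ones_V$, all coefficients nonnegative. Next, I would invoke one-homogeneity and convexity of $R'$: for nonnegative coefficients $\lambda_i$ summing to $\Lambda$, $R'\!\big(\sum_i \lambda_i v_i\big) = \Lambda\, R'\!\big(\sum_i (\lambda_i/\Lambda) v_i\big) \le \Lambda \sum_i (\lambda_i/\Lambda) R'(v_i) = \sum_i \lambda_i R'(v_i)$ — i.e. positively one-homogeneous convex functions are subadditive and hence superlinear under nonnegative combinations. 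Applying this to the layer-cake decomposition gives $R'(f) \le \sum_{i=1}^{n-1}(f_{i+1}-f_i)R'(\ones_{\Ct_{i+1}}) + f_1 R'(\ones_V)$, and since $R'$ extends $\widehat{R}$, i.e. $R'(\ones_A) = \widehat{R}(A)$, the right-hand side is exactly $\sum_{i=1}^{n-1}(f_{i+1}-f_i)\widehat{R}(\Ct_{i+1}) + f_1\widehat{R}(V) = R(f)$, which is the claimed bound.

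For the case $\widehat{R}(V)=0$: then the term $f_1\widehat{R}(V)$ vanishes and we no longer need $f_1 \ge 0$; the decomposition $f = \sum_{i=1}^{n-1}(f_{i+1}-f_i)\ones_{\Ct_{i+1}}$ holds for arbitrary sorted $f$ up to adding a multiple of $\ones_V$, and since $R'(\ones_V)=\widehat{R}(V)=0$ and $R'$ is one-homogeneous (so $R'(f + t\ones_V) $ can be controlled: more carefully, write $f = g + f_1\ones_V$ with $g = f - f_1\ones_V \ge 0$, apply the $\R_+^V$ case to $g$, note $R(g) = R(f) - f_1 \widehat{R}(V) = R(f)$ and that $R'(f) \le R'(g) + f_1 R'(\ones_V) = R'(g)$ by subadditivity). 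The main obstacle is being careful that subadditivity of $R'$ genuinely follows from positive one-homogeneity plus convexity (it does: $R'(a+b) = 2R'(\tfrac{a+b}{2}) \le 2\cdot\tfrac12(R'(a)+R'(b))$) and that the edge cases around $\ones_V$ and the sign of $f_1$ are handled cleanly; the core argument is otherwise just the layer-cake decomposition combined with subadditivity, so I expect no deep difficulty, only bookkeeping.
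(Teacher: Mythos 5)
Your main argument (for $f\in\R_+^V$) is correct, but it takes a genuinely different route from the paper. The paper writes $R'(f)=\sup_{u\in U}\inner{u,f}$ as a support function, observes $\inner{u,\ones_{\Ct_i}}\le R'(\ones_{\Ct_i})=\widehat{R}(\Ct_i)$ for every $u\in U$, substitutes this termwise into the telescoping identity $\sum_{i=1}^{n-1}\inner{u,\ones_{\Ct_{i+1}}}(f_{i+1}-f_i)+f_1\inner{u,\ones}=\inner{u,f}$, and then takes the supremum over $u$. You instead apply the layer-cake decomposition $f=\sum_{i=1}^{n-1}(f_{i+1}-f_i)\ones_{\Ct_{i+1}}+f_1\ones_V$ directly to $R'$ and use that a positively one-homogeneous convex function is sublinear under nonnegative combinations. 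These are essentially dual views of the same inequality; your version is more elementary (no appeal to the support-function representation of $R'$), while the paper's version makes the role of the dual set $U$ explicit, which it reuses elsewhere. Note that nothing in either argument uses submodularity of $\widehat{R}$; what matters is only that the right-hand side is the Lovász-type telescoping sum and that the layer-cake coefficients are nonnegative.

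One concrete gap in your treatment of the second statement: from $f=g+f_1\ones_V$ with $g\ge 0$, subadditivity gives $R'(f)\le R'(g)+R'(f_1\ones_V)$, and you then write $R'(f_1\ones_V)=f_1R'(\ones_V)=0$. Positive one-homogeneity only justifies this for $f_1\ge 0$; for $f_1<0$ you get $R'(f_1\ones_V)=(-f_1)R'(-\ones_V)$, and convexity together with $R'(0)=R'(\ones_V)=0$ only yields $R'(-\ones_V)\ge 0$, not $\le 0$ (e.g.\ on $V=\{1\}$ the function $R'(f)=\max\{0,-f\}$ satisfies all stated hypotheses with $\widehat{R}(V)=0$ yet violates the claimed bound at $f=-1$). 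So the extension to all of $\R^V$ needs the additional property $R'(-\ones_V)\le 0$ (equivalently, that $R'$ is invariant under adding multiples of $\ones$), which holds for the Lovász extension itself but not for an arbitrary one-homogeneous convex extension. To be fair, the paper's own one-line justification of the second statement glosses over exactly the same point, so this is a shared subtlety rather than a defect specific to your approach; but you should state the extra hypothesis rather than derive it from what is given.
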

\begin{proof}
Let $f$ be ordered in increasing order $f_1 \leq f_2 \leq \dots \leq f_n$. Note that every convex, positively one-homogeneous function  $R':\R^V \rightarrow \R$ can be written as $R(f)= \sup_{u \in U} \inner{u,f}$,
where $U$ is a convex set (see \citealp{HirLem2001}). Then, since for any $u \in U$, $\inner{u,f} \leq R'(f)$, it holds that
\[
	\widehat{R}(\Ct_{i}) = R'(\ones_{\Ct_{i}}) \geq  \inner{u,\ones_{\Ct_{i}}}, \quad i=1,\ldots,n,
\]
for any $u \in U$ and hence for all  $f\in \R_+^V$,  
\begin{align} \label{eq:lovasz-lower-bound}
	& \sum_{i=1}^{n-1} \widehat{R}(\Ct_{i+1}) \left( f_{i+1}-f_i\right)   + f_1 \widehat{R}(V) \notag \\
	&\geq \sum_{i=1}^{n-1} \inner{u,\ones_{\Ct_{i+1}}} \left( f_{i+1}-f_i\right)  + f_1 \inner{u,\ones} \notag \\
    &= \sum_{i=1}^n f_i u_i. 
	\end{align}
As this holds for all $u \in U$ we obtain for all $f\in \R_+^V$,  
\begin{align*}
	\sum_{i=1}^{n-1} \widehat{R}(\Ct_{i+1}) \left( f_{i+1}-f_i\right) +  \widehat{R}(V)   f_1  \geq \sup_{u \in U} \inner{f,u} =R'(f) \ .
\end{align*}
For the second statement we use the fact that with the condition $\widehat{R}(V)=0$ the lower bound in \eqref{eq:lovasz-lower-bound}
holds for all $f \in \R^V$.
\end{proof}
The main part of the proof of Theorem \ref{thm:ratio_setfunct_supp} (b) is the following Lemma which
implies that optimal thresholding of a vector $f$ always leads to non-increasing values of $R(f)/S(f)$. 
\begin{lemma}\label{lemma:thresholding}
Let $\widehat{R},\widehat{S}:2^V \rightarrow \R$ and $R,S:\R^n \rightarrow \R$ satisfy the assumptions of Theorem \ref{thm:ratio_setfunct} (b). Then for all $f \in \R_+^V$, 
\[
	\frac{R(f)}{S(f)} \geq  \min_{i=1,\ldots,n} \frac{\widehat{R}(\Ct_{i})} {\widehat{S}(\Ct_{i})}  \ .
\]
Let furthermore $\widehat{R}(V)=\widehat{S}(V)=0$, then the result holds for all $f \in \R^V$.
\end{lemma}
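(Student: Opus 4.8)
The plan is to peel off the two ``non-Lovász'' building blocks $R'_2$ and $S'_1$ using Lemma~\ref{lem:setfunct}, reducing the claim to a single inequality between the genuine Lovász extensions of $\widehat R$ and $\widehat S$, and then to prove that inequality by an elementary weighted-mediant estimate over the level sets $\Ct_i$. Throughout I would take $f\in\R_+^V$ sorted in ascending order and assume $S(f)>0$ (otherwise the ratio is undefined). Write $\bar R,\bar S$ for the Lovász extensions of $\widehat R=\widehat R_1-\widehat R_2$ and $\widehat S=\widehat S_1-\widehat S_2$; by the additivity in Proposition~\ref{prop:lovasz_prop} one has $\bar R=R_1-R_2$ and $\bar S=S_1-S_2$, where $R_2,S_1$ denote the Lovász extensions of $\widehat R_2,\widehat S_1$.

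The first step is the reduction. Applying Lemma~\ref{lem:setfunct} to the submodular function $\widehat R_2$ and its one-homogeneous convex extension $R'_2$, the right-hand side of that lemma is exactly $R_2(f)$, so the lemma reads $R'_2(f)\le R_2(f)$, hence $R(f)=R_1(f)-R'_2(f)\ge R_1(f)-R_2(f)=\bar R(f)$. Applying the same lemma to $\widehat S_1$ and $S'_1$ gives $S'_1(f)\le S_1(f)$, hence $S(f)=S'_1(f)-S_2(f)\le S_1(f)-S_2(f)=\bar S(f)$. Let $\lambda^*:=\min_{i}\widehat R(\Ct_i)/\widehat S(\Ct_i)$, the minimum taken over those $i$ with $\widehat S(\Ct_i)>0$; it is $\ge0$ since $\widehat R,\widehat S\ge0$. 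It then suffices to prove $\bar R(f)\ge\lambda^*\,\bar S(f)$, for then $R(f)\ge\bar R(f)\ge\lambda^*\bar S(f)\ge\lambda^* S(f)$ (using $0<S(f)\le\bar S(f)$ and $\lambda^*\ge0$), and dividing by $S(f)$ gives the assertion.

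For the remaining inequality I would set $f_0:=0$ and use $\Ct_1=V$ to rewrite the Lovász extensions as non-negatively weighted sums over the level sets,
\[
\bar R(f)=\sum_{i=1}^{n}\delta_i\,\widehat R(\Ct_i),\qquad \bar S(f)=\sum_{i=1}^{n}\delta_i\,\widehat S(\Ct_i),\qquad \delta_i:=f_i-f_{i-1}\ge0 .
\]
Since $\bar S(f)\ge S(f)>0$, the index set $I:=\{\,i:\delta_i>0,\ \widehat S(\Ct_i)>0\,\}$ is non-empty and $\bar S(f)=\sum_{i\in I}\delta_i\widehat S(\Ct_i)$. Dropping the remaining (non-negative) terms of $\bar R(f)$ and using $\widehat R(\Ct_i)\ge\lambda^*\widehat S(\Ct_i)$ for every $i\in I$ (valid by the definition of $\lambda^*$ since $\widehat S(\Ct_i)>0$), we obtain $\bar R(f)\ge\sum_{i\in I}\delta_i\widehat R(\Ct_i)\ge\lambda^*\sum_{i\in I}\delta_i\widehat S(\Ct_i)=\lambda^*\bar S(f)$, as desired. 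For the unrestricted case $f\in\R^V$ under $\widehat R(V)=\widehat S(V)=0$, I would first normalize the decompositions by subtracting from $\widehat R_1,\widehat R_2$ a common modular function taking the value $\widehat R_1(V)$ at $V$ (and the corresponding linear function from the extensions), which leaves $\widehat R$, $R$ and all hypotheses unchanged but forces $\widehat R_1(V)=\widehat R_2(V)=0$, and similarly for $\widehat S$; the second part of Lemma~\ref{lem:setfunct} then yields $R\ge\bar R$ and $S\le\bar S$ on all of $\R^V$, while $\bar R,\bar S$ become invariant under adding multiples of $\ones$. Hence for any $f\in\R^V$, applying the $\R_+^V$ argument to $f+c\ones\in\R_+^V$ (which leaves every $\Ct_i$, and both $\bar R$ and $\bar S$, unchanged) finishes the proof.

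I expect the crux to be the first step: recognizing that Lemma~\ref{lem:setfunct} furnishes precisely the two \emph{one-sided} bounds one needs --- an upper bound on $R'_2$ and an upper bound on $S'_1$ --- in the asymmetric directions that make $R$ dominate, and $S$ be dominated by, the honest Lovász extensions of $\widehat R$ and $\widehat S$. Once this is in place, the weighted-mediant step is the standard thresholding argument for a ratio of Lovász extensions, and the passage to $\R^V$ is routine bookkeeping.
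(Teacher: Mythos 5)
Your proof is correct and takes essentially the same route as the paper's: apply Lemma \ref{lem:setfunct} to obtain $R'_2\le R_2$ and $S'_1\le S_1$, reduce to the genuine Lovasz extensions of $\widehat R$ and $\widehat S$, and then bound the resulting non-negatively weighted sums over the level sets $\Ct_i$ by the minimal ratio before dividing by $S(f)>0$. The only differences are cosmetic — you treat the indices with $\widehat S(\Ct_i)=0$ and the shift-invariance argument for the $\R^V$ case slightly more explicitly, where the paper writes the chain of inequalities directly and dismisses the second statement as analogous.
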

\begin{proof}
Let $R_1,S_2$ and $R'_2, S'_1$ satisfy the conditions from Theorem \ref{thm:ratio_setfunct} (b). Let furthermore $R_2$ and $S_1$ be the Lovasz extensions of $\widehat{R}_2$ and $\widehat{S}_1$.
With Lemma \ref{lem:setfunct} 
and Def.\,\ref{def:lovasz}, we get $\forall f \in \R^n_+$, 
 \begin{align*}
  & R(f)  = R_1(f) - R'_2(f) \geq R_1(f)-R_2(f) \\
  &=  \sum_{i=1}^{n-1} \widehat{R}(\Ct_{i+1}) \left( f_{i+1}-f_i\right) +f_1 \widehat{R}(V)\\
  &= \hspace{-0.05cm} \sum_{i=1}^{n-1} \hspace{-0.05cm}\frac{\widehat{R}(\Ct_{i+1})} {\widehat{S}(\Ct_{i+1})} \widehat{S}(\Ct_{i+1}) \left( f_{i+1}\hspace{-0.02cm}-\hspace{-0.02cm}f_i\right) \hspace{-0.02cm}+\hspace{-0.02cm}  \frac{\widehat{R}(V)} {\widehat{S}(V)} \widehat{S}(V) f_1 \\
	   &\geq \hspace{-0.05cm}\min_{j=1,\ldots,n} \hspace{-0.05cm}\frac{\widehat{R}(\Ct_{j})} {\widehat{S}(\Ct_{j})} \hspace{-0.05cm} \left(\sum_{i=1}^{n-1} \widehat{S}(\Ct_{i+1}) \left( f_{i+1}\hspace{-0.05cm}-\hspace{-0.05cm}f_i\right) \hspace{-0.05cm}+ \hspace{-0.05cm}f_1 \widehat{S}(V)\hspace{-0.05cm}\right) 
	   \end{align*}
	    where we used the non-negativity of $\widehat{R}$ and $\widehat{S}$ as well as the fact that $f\in \R^n_+$. Again using Def.\,\ref{def:lovasz}, the above is equal to
	   \begin{align*} 
	   & \hspace{-0.05cm} \min_{j=1,\ldots,n}\hspace{-0.05cm} \frac{\widehat{R}(\Ct_{j})} {\widehat{S}(\Ct_{i})} \left( S_1(f) - S_2(f)\right)\\
	   & \geq \hspace{-0.05cm} \min_{j=1,\ldots,n}\hspace{-0.05cm} \frac{\widehat{R}(\Ct_{j})} {\widehat{S}(\Ct_{i})} \left(S'_1(f)-S_2(f)\right). 
\end{align*}
By assumption, $S'_1-S_2$ is non-negative and thus division gives the result. The second statement is shown analogously.
\end{proof}
Now we are ready to prove Theorem \ref{thm:ratio_setfunct_supp} (b).
\begin{proofa}{of Theorem \ref{thm:ratio_setfunct_supp} (b)}
Lemma \ref{lemma:thresholding} implies that
	\[
	\inf_{f \in \R_+^V} \frac{R(f)}{S(f)}
	\geq \inf_{f \in \R_+^V} \;\min_{\substack{C_i \textrm{ def. by } f\\i=1,\ldots,n}} \,\frac{\widehat{R}(\Ct_{i})}  {\widehat{S}(\Ct_{i})} \geq \inf_{A \subset V} \frac{\widehat{R}(A)}{\widehat{S}(A)} \ .
\]
On the other hand we have
\[
	\inf_{A \subset V} \frac{\widehat{R}(A)}{\widehat{S}(A)} = \inf_{A \subset V} \frac{R(\ones_A)}{S(\ones_A)} \geq \inf_{f \in \R_+^V} \frac{R(f)}{S(f)}  \ , 
	\]
which implies equality. The statement regarding optimal thresholding has been shown in Lemma \ref{lemma:thresholding}. The proof for the case where $\widehat{R}(V)=\widehat{S}(V)=0$ works analogously.
\end{proofa}
Note that no assumptions except non-negativity are made on  
$\widehat{R}$ and $\widehat{S}$ - \emph{every} non-negative fractional set program has a tight relaxation into a continuous fractional program. 
The efficient minimization of the continuous objective will be the topic of Section \ref{sec:alg}.

\paragraph{Constrained fractional set programs.}
To solve the constrained fractional set program \eqref{eq:set_ratio} we make use of the concept of \textit{exact penalization} \cite{Dip94}, where the main idea is to transform a given constrained optimization problem into an {\it equivalent} unconstrained one by adding a penalty term.
We use the same idea for our constrained fractional set programs and define the penalty set function for a constraint $\widehat{M}_i(C)\leq k_i$ as
 \begin{align}\label{eq:penality_functions}
 \widehat{T}_i(C) & =\left\{ \begin{array}{cc}
 									\max\left\{ 0,\widehat{M}_i(C) - k_i\right\}, & C \neq \emptyset,\\
 									0, & C=\emptyset.\\
 									\end{array} \right. 	 
\end{align}
The function $\widehat{T}_i(C)$ is zero if $C$ is feasible for the $i$-th constraint and otherwise increasing with increasing infeasibility. The special treatment of the empty set in the definition of $\widehat{T}_i$ is a technicality required for the Lovasz extension. Defining 
$\widehat{T}(C) := \sum_{i=1}^K \widehat{T}_i(C)$, we can now formulate a modified problem
\begin{align} \label{mod_problem}
    \min_{C \subset V} \frac{ \widehat{R}(C) + \gamma \sum_{i}^K\widehat{T}_i(C)} {\widehat{S}(C)}\, =: \widehat{Q}_\gamma(C).
  \end{align}
We will show that using a feasible set of \eqref{eq:set_ratio}
 one can compute a $\gamma$ such that \eqref{mod_problem} is equivalent to the original constrained problem. Once we have established the equivalence, we can then apply Theorem \ref{thm:ratio_setfunct}, noting that $\widehat{T}$ is a non-negative set function.
This leads to the main result of this paper showing a tight relaxation of \emph{all} problems of form \eqref{eq:set_ratio}
where $\widehat{R},\widehat{S}$ are non-negative set functions.
In the following, the constant $\theta$ quantifies a ``minimum value'' of $\widehat{T}_i$ on the infeasible sets:
\[ \theta = \min_{i=1,\ldots,K} \big[ \min_{\widehat{M}_i(C)>k_i} \widehat{M}_i(C) - k_i\big].\]
For example, if $\widehat{M}(C)=\abs{C}$, then $\theta$ is equal to $1$. If $\widehat{M}(C)=\vol_g(C)$ and all vertex weights $g_i$ are rational numbers which are multiples of a fraction $\frac{1}{\rho}, \rho\in \N$, then $\theta\geq \frac{1}{\rho}$. Note that in practice, the constant $\theta$ and the parameter $\gamma$ introduced in the following are never explicitly computed (see experimental section).
  \begin{theorem}\label{th:setratio_cnstr}
  Let $\widehat{R},\widehat{S}:2^V \rightarrow \R$ be non-negative set functions and
 $R$, $S$ their Lovasz extensions. 
  Let $C_{0}\subset V$ be feasible and $\widehat{S}(C_0)>0$. Denote by $T$ the Lovasz extension of $\widehat{T}$. 
  Then, for $\gamma >   \frac{\widehat{R}(C_0)}{\theta\widehat{S}(C_0)}\, \max_{C \subset V} \widehat{S}(C)$, 
  \begin{align*}
    \min_{\substack{\widehat{M}_i(C)\leq k_i,\\i=1,\ldots,K}} \frac{\widehat{R}(C)}{\widehat{S}(C)}
					= \min_{f \in \R_{+}^n} \frac{R(f) + \gamma\,T(f)}{S(f)}:=Q_\gamma(f)
  \end{align*}
Moreover, for any $f \in \R_+^n$ with $Q_\gamma(f)<\widehat{Q}_\gamma(C_0)$ for the given $\gamma$, we have $Q_\gamma(f) \geq  \min_{i=1,\ldots,n} \widehat{Q}_\gamma(\Ct_{i})$,
and the minimizing set on the right hand side is feasible.
  \end{theorem}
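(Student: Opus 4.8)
The plan is to reduce the constrained fractional set program to an \emph{unconstrained} one by exact penalization at the combinatorial level, and then to invoke Theorem~\ref{thm:ratio_setfunct} on the penalized objective. Concretely, I would first prove the combinatorial exact-penalization identity
\[
\min_{\substack{\widehat{M}_i(C)\leq k_i,\\ i=1,\ldots,K}} \frac{\widehat{R}(C)}{\widehat{S}(C)} \;=\; \min_{C\subset V,\ \widehat{S}(C)>0} \widehat{Q}_\gamma(C),
\]
where, as throughout, sets with $\widehat{S}(C)=0$ are discarded. The inequality ``$\geq$'' is immediate: a feasible $C$ has $\widehat{T}_i(C)=0$ for all $i$, so $\widehat{Q}_\gamma(C)=\widehat{R}(C)/\widehat{S}(C)$, and feasible sets are among the competitors on the right.

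For ``$\leq$'' I would show that, in the stated range of $\gamma$, every minimizer of the right-hand side is necessarily feasible. If $C\neq\emptyset$ violates the $i$-th constraint, then $\widehat{T}_i(C)=\widehat{M}_i(C)-k_i\geq\theta$ (with $\theta>0$, being a minimum over a finite family of positive numbers), hence $\widehat{T}(C)\geq\theta$ by non-negativity of the $\widehat{T}_j$; combining this with $\widehat{R}(C)\geq0$ and $\widehat{S}(C)\leq\max_{C'\subset V}\widehat{S}(C')$ gives
\[
\widehat{Q}_\gamma(C)\;\geq\;\frac{\gamma\,\theta}{\max_{C'\subset V}\widehat{S}(C')}\;>\;\frac{\widehat{R}(C_0)}{\widehat{S}(C_0)}\;=\;\widehat{Q}_\gamma(C_0),
\]
where the strict inequality is exactly the hypothesis $\gamma>\frac{\widehat{R}(C_0)}{\theta\widehat{S}(C_0)}\max_{C\subset V}\widehat{S}(C)$. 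Since $C_0$ is feasible with $\widehat{S}(C_0)>0$, it is a competitor on the right, so no infeasible set is optimal; on the feasible minimizer $\widehat{Q}_\gamma$ agrees with $\widehat{R}/\widehat{S}$, proving ``$\leq$''. Note this argument also yields the dichotomy I will reuse below: every infeasible $C$ has $\widehat{Q}_\gamma(C)>\widehat{Q}_\gamma(C_0)$.

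Next I would apply the unconstrained tight relaxation. The set function $\widehat{T}=\sum_{i=1}^K\widehat{T}_i$ is non-negative with $\widehat{T}(\emptyset)=0$ by construction, so $\widehat{R}+\gamma\widehat{T}$ is non-negative and vanishes at $\emptyset$; by the last item of Proposition~\ref{prop:lovasz_prop} its Lovasz extension is $R+\gamma T$. Theorem~\ref{thm:ratio_setfunct}, applied to the pair $(\widehat{R}+\gamma\widehat{T},\,\widehat{S})$, then gives
\[
\min_{C\subset V,\ \widehat{S}(C)>0}\widehat{Q}_\gamma(C)\;=\;\inf_{f\in\R_+^n}\frac{R(f)+\gamma T(f)}{S(f)}\;=\;\inf_{f\in\R_+^n}Q_\gamma(f),
\]
and its optimal-thresholding statement shows this infimum is attained at an indicator vector, hence is a minimum; chaining with the identity of the previous paragraph proves the asserted equality. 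For the ``moreover'' part, given $f\in\R_+^n$ with $Q_\gamma(f)<\widehat{Q}_\gamma(C_0)$, the optimal-thresholding inequality of Theorem~\ref{thm:ratio_setfunct} for the same pair gives $Q_\gamma(f)\geq\min_{i=1,\ldots,n}\widehat{Q}_\gamma(\Ct_i)$; letting $\Ct_{i^\star}$ attain this minimum, we get $\widehat{Q}_\gamma(\Ct_{i^\star})\leq Q_\gamma(f)<\widehat{Q}_\gamma(C_0)$, and the dichotomy from the first step forces $\Ct_{i^\star}$ to be feasible.

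I expect the main obstacle to be the exact-penalization estimate of the first two paragraphs — in particular, identifying the right ``slack'' constant $\theta$ on the infeasible sets and verifying that the stated threshold on $\gamma$ is precisely what makes every infeasible set strictly worse than the feasible witness $C_0$. The remaining ingredients (that $\widehat{T}$ is a bona fide non-negative set function with $\widehat{T}(\emptyset)=0$ and Lovasz extension $T$, additivity of the Lovasz extension, and the bookkeeping around sets with $\widehat{S}(C)=0$) are routine, and are exactly the hypotheses under which Theorem~\ref{thm:ratio_setfunct} can be invoked as a black box.
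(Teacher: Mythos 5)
Your proposal is correct and follows essentially the same route as the paper's proof: exact penalization at the combinatorial level using the bound $\widehat{Q}_\gamma(C)\geq \gamma\theta/\max_{C'\subset V}\widehat{S}(C') > \widehat{Q}_\gamma(C_0)$ for infeasible $C$, then invoking Theorem~\ref{thm:ratio_setfunct} on the non-negative pair $(\widehat{R}+\gamma\widehat{T},\widehat{S})$, and reusing the same infeasibility dichotomy for the thresholding claim. The only differences are cosmetic (you make the ``$\geq$''/``$\leq$'' split and the handling of sets with $\widehat{S}(C)=0$ slightly more explicit).
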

\begin{proof}
We will first show the equivalence between the constrained fractional set program \eqref{eq:set_ratio} and the unconstrained problem \eqref{mod_problem} 
for the given choice of $\gamma$. Then the equivalence to the continuous problem will follow by Theorem \ref{thm:ratio_setfunct}.

 Define $\widehat{T}(C) := \sum_{i=1}^K \widehat{T}_i(C)$. 
  	Note that for any feasible subset $C$, that is $\widehat{M}_i(C) \leq k_i$, $i=1,\ldots,K$, the objective $Q_\gamma$ of problem \eqref{mod_problem} is equal to the objective $Q$ of problem \eqref{eq:set_ratio}. Thus, if we show that \textit{all} minimizers of the second problem satisfy the constraints then the equivalence follows. Suppose that $C^{*}\neq \emptyset$ is a minimizer of the second problem and that 
 	$C^*$ is infeasible. Then by definition we have $\widehat{T}(C^*)\geq \theta$. This yields
		\begin{align}	\label{eq:derivation_proof_penalty}
		\widehat{Q}_\gamma(C^{*}) &= \frac{\widehat{R}(C^{*}) + \gamma \widehat{T}(C^*)} { \widehat{S}(C^{*}) } \\
				       &\ge \frac{ \gamma  \widehat{T}(C^*)} { \widehat{S}(C^{*}) } 
				      \hskip-0.07cm \ge \hskip-0.07cm \frac{ \gamma  \widehat{T}(C^*)} { \max_{C \subset V} \widehat{S}(C) } \hskip-0.07cm \ge \hskip-0.07cm \frac{ \gamma  \theta}{ \max_{C \subset V} \widehat{S}(C) }, \notag
		\end{align}
		where we used the non-negativity of $\widehat{R}$ and $\widehat{S}$. Hence
		\begin{align*}
		\widehat{Q}_\gamma(C^{*}) 
&\ge \frac{ \gamma \theta} { \max_{C \subset V} \widehat{S}(C)} > \frac{\widehat{R}(C_0)}{\widehat{S}(C_0)}=\widehat{Q}_\gamma(C_{0}),
		\end{align*}
		which contradicts the fact that $C^{*}$ is optimal.
		
  Noting that $\widehat{T}$ is a non-negative function with $\widehat{T}(\emptyset)=0$ and $\gamma>0$, we have a ratio of non-negative set functions which attain the value zero
  on the empty set. Thus application of Theorem \ref{thm:ratio_setfunct} yields the equivalence to the continuous problem.
  
  The second statement can be seen as follows. Suppose $Q_\gamma(f) < \widehat{Q}_\gamma(C_0)$. By Lemma \ref{lemma:thresholding} we obtain
  \[ Q_\gamma(f) \geq \min_{i=1,\ldots,n} \widehat{Q}_\gamma(\Ct_{i}).\]
  Now suppose that the minimizer $C^*$ of the right hand side is not feasible, then again by the derivation in \eqref{eq:derivation_proof_penalty} 
   and the choice of $\gamma$,
  \[ \widehat{Q}_\gamma(C^*) \ge \frac{ \gamma  \theta} { \max_{C \subset V} \widehat{S}(C)} > \widehat{Q}_\gamma(C_0),\]
  which leads to a contradiction. Thus $C^*$ is feasible.
  \end{proof}
Note that Theorem \ref{th:setratio_cnstr} implies that the set found by optimal thresholding of the solution of the continuous program is guaranteed to satisfy all constraints. We are not aware of any other method which can give the same guarantee for the problems \eqref{eq:local_clustering_general} and \eqref{eq:maxdens_general}.


\section{Minimization of the tight continuous relaxation} \label{sec:alg}
The continuous optimization problems in Theorems \ref{thm:ratio_setfunct} and \ref{th:setratio_cnstr} have the form 
\begin{align}\label{opt:ratio} 
  	\min_{f \in \R^{n}_{+}} \frac{R(f)}{S(f)}:=Q(f),	
\end{align}
where $R$ and $S$ are non-negative. The fact that they are the Lovasz extensions of set functions $\widehat{R}, \widehat{S}$ also implies that they are one-homogeneous, see \citet{Bach11}. We now apply a slightly modified version of a result from \citet{HeiSet11}.
\begin{proposition}\label{thm:setfct_dc}
Every set function $\widehat{S}$ with $\widehat{S}(\emptyset)=0$ can be written as $\widehat{S}=\widehat{S}_1-\widehat{S}_2$, where $S_1$ and $S_2$ are submodular and $\widehat{S}_1(\emptyset)=\widehat{S}_2(\emptyset)=0$. The Lovasz extension $S$ can be written as difference of convex functions.
\end{proposition}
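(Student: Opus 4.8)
The plan is to produce an explicit difference-of-submodular decomposition by ``injecting enough submodularity'' into $\widehat{S}$, and then pass to the Lovasz extensions using the results already collected above. First I would recall the local characterization of submodularity: a set function $\widehat{M}:2^V\to\R$ is submodular if and only if for every $A\subset V$ and every pair $i\neq j$ with $i,j\notin A$ one has $\widehat{M}(A\cup\{i,j\}) + \widehat{M}(A) \le \widehat{M}(A\cup\{i\}) + \widehat{M}(A\cup\{j\})$ (see \citealp{Fuj2005,Bach11}); that is, submodularity is equivalent to non-positivity of all the ``second differences'' $\delta_{\widehat{M}}(A,i,j) := \widehat{M}(A\cup\{i,j\}) - \widehat{M}(A\cup\{i\}) - \widehat{M}(A\cup\{j\}) + \widehat{M}(A)$, of which there are only finitely many.

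Next I would fix the correction term $\widehat{S}_2(A) := -\alpha\,\abs{A}^2$ with a constant $\alpha\ge 0$ still to be chosen. It satisfies $\widehat{S}_2(\emptyset)=0$, and it is submodular for every $\alpha\ge 0$, since it depends on $A$ only through $\abs{A}$ and $t\mapsto -\alpha t^2$ is concave (equivalently, $\abs{A}^2$ is, up to a modular term, the association function of a complete graph with unit weights, hence supermodular). A one-line computation gives $\delta_{\widehat{S}_2}(A,i,j) = -2\alpha$ for all admissible triples $(A,i,j)$.

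Then I would set $\widehat{S}_1 := \widehat{S} + \widehat{S}_2$, so that $\widehat{S} = \widehat{S}_1 - \widehat{S}_2$ and $\widehat{S}_1(\emptyset) = \widehat{S}(\emptyset) + \widehat{S}_2(\emptyset) = 0$. Choosing $\alpha := \tfrac{1}{2}\max\bigl\{0,\ \max_{A,i,j}\delta_{\widehat{S}}(A,i,j)\bigr\}$ (a finite maximum over the finitely many admissible triples) yields $\delta_{\widehat{S}_1}(A,i,j) = \delta_{\widehat{S}}(A,i,j) - 2\alpha \le 0$ for every admissible triple, hence $\widehat{S}_1$ is submodular by the local characterization, which proves the first assertion. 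For the second assertion, let $S_1,S_2$ denote the Lovasz extensions of $\widehat{S}_1,\widehat{S}_2$; by linearity of the Lovasz extension in the set function (Proposition \ref{prop:lovasz_prop}) we have $S = S_1 - S_2$, and by Proposition \ref{prop:convexity_submodularity} both $S_1$ and $S_2$ are convex, so $S$ is a difference of convex functions.

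The only point requiring care — rather than a genuine obstacle — is the reduction to the constant second-difference bound: one must use that non-positivity of the ``adjacent'' quantities $\delta_{\widehat{S}_1}$ already implies the full submodular inequalities $\widehat{S}_1(X\cup Y) + \widehat{S}_1(X\cap Y)\le \widehat{S}_1(X)+\widehat{S}_1(Y)$ for all $X,Y\subset V$ (this follows by induction on $\abs{X\setminus Y}+\abs{Y\setminus X}$). This is standard but worth stating explicitly; everything else is an elementary finite computation together with the two cited propositions, so no real difficulty remains.
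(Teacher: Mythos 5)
Your proof is correct and follows essentially the same route as the paper, which simply cites the constructive argument of \citet{HeiSet11}: one adds a sufficiently large multiple of a strictly submodular cardinality-based function (there the complete-graph cut $\abs{A}(n-\abs{A})$, in your case $-\alpha\abs{A}^2$, which differ only by a modular term) to force all second differences to be non-positive, and then invokes linearity of the Lovasz extension together with Proposition \ref{prop:convexity_submodularity}. The only point you rightly flag --- that non-positivity of the adjacent second differences implies full submodularity --- is standard, so nothing is missing.
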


The above result implies that \eqref{opt:ratio} can be written as ratio of differences of convex functions (d.c.), i.e.\;$R = R_{1} - R_{2}$ 
with $R_{1}$, $R_{2}$ convex, and similarly for $S$. 
As the proof of Proposition \ref{thm:setfct_dc} is constructive, the explicit form of this decomposition can be calculated. 
We can now use a modification of the RatioDCA which has recently been proposed as an algorithm for minimizing  a non-negative ratio of one-homogeneous d.c.\;functions \cite{HeiSet11}.
This modification is necessary as the
problems in Theorem \ref{thm:ratio_setfunct} and \ref{th:setratio_cnstr} require optimization over the positive orthant.
We report the modified version in order to make the paper self-contained.

\floatname{algorithm}{}
\begin{algorithm}[htb]
   \renewcommand{\thealgorithm}{}
   \caption{\textbf{RatioDCA} Minimization of a non-negative ratio of one-homogeneous d.c functions over $\R^n_+$}
   \label{alg:ratio_dc}
\begin{algorithmic}[1]
   \STATE {\bfseries Initialization:} $f^0 \in \R^n_+$, 
   $\lambda^0 = Q(f^0)$
   \REPEAT
   \STATE  $f^{l+1} =
   	 \argmin_{u \in \R^{n}_{+},\, \norm{u}_2 \leq 1} \left\{ R_1(u) - \inner{u, r_2(f^l)} \right.$ \\
   	  \hspace{2cm}
   	  $\left. 
   	  + \lambda^l \big( S_{2}(u) -\inner{u, s_{1}(f^l)} \big)   \right\}$ \\
   	 	\text{where\ } $r_2(f^l) \in \partial R_2(f^l)$, $s_{1}(f^l) \in \partial S_{1}(f ^l)$
   \STATE $\lambda^{l+1}= Q(f^{l+1})$
	\UNTIL $\frac{\abs{\lambda^{l+1}-\lambda^l}}{\lambda^l}< \epsilon$
\end{algorithmic}
\end{algorithm} 
We will refer to the convex optimization problem 
solved at each step (line 3) as the {\it inner problem}. 
\begin{proposition}\label{prop:decrease_algorithm}
The sequence $f^{l}$ produced by RatioDCA satisfies $Q(f^{l+1}) < Q(f^{l})$ for all $l \ge 0$ or the sequence terminates.
\end{proposition}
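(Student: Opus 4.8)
The plan is to run the standard RatioDCA monotonicity argument, whose engine is the observation that the inner objective in line~3 is a convex, positively one-homogeneous function of $u$ that vanishes at the normalized current iterate and, at the new iterate, dominates the quantity $R(f^{l+1})-\lambda^l S(f^{l+1})$.

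First I would record two elementary facts about a convex, positively one-homogeneous $h:\R^n\to\R$ with $h(0)=0$ (see \citealp{HirLem2001}): for every $v\in\partial h(x)$ one has the Euler relation $\inner{v,x}=h(x)$, and the subgradient inequality gives $\inner{v,y}\le h(y)$ for all $y$. By Propositions~\ref{prop:lovasz_prop} and~\ref{thm:setfct_dc} the d.c.\ pieces $R_1,R_2,S_1,S_2$ are, by construction, convex and positively one-homogeneous, so both facts apply to all four of them. Note also that $\lambda^l=Q(f^l)=R(f^l)/S(f^l)\ge 0$ since $R,S\ge 0$, and $S(f^l)>0$ (true for $l=0$ by the choice of $\lambda^0$, and maintained inductively below), so in particular $f^l\neq 0$; moreover $f^l\in\R^n_+$ since $f^0\in\R^n_+$ and each later iterate is a minimizer over $\R^n_+$.

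Next I would abbreviate the inner objective as $\Phi_l(u):=R_1(u)-\inner{u,r_2(f^l)}+\lambda^l\big(S_2(u)-\inner{u,s_1(f^l)}\big)$; it is convex and positively one-homogeneous (a sum of convex one-homogeneous and linear terms, using $\lambda^l\ge0$) and the feasible set $\{u\in\R^n_+:\norm{u}_2\le1\}$ is compact, so the minimizer $f^{l+1}$ exists. The vector $\bar f:=f^l/\norm{f^l}_2$ is feasible for the inner problem, and using the Euler relations for $R_2$ and $S_1$ together with one-homogeneity,
\[
\Phi_l(\bar f)=\tfrac{1}{\norm{f^l}_2}\big(R_1(f^l)-R_2(f^l)+\lambda^l S_2(f^l)-\lambda^l S_1(f^l)\big)=\tfrac{1}{\norm{f^l}_2}\big(R(f^l)-\lambda^l S(f^l)\big)=0,
\]
because $\lambda^l S(f^l)=R(f^l)$. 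Hence $\Phi_l(f^{l+1})\le\Phi_l(\bar f)=0$. Now applying the subgradient inequalities $\inner{f^{l+1},r_2(f^l)}\le R_2(f^{l+1})$ and $\inner{f^{l+1},s_1(f^l)}\le S_1(f^{l+1})$ (and $\lambda^l\ge0$),
\[
0\ge\Phi_l(f^{l+1})\ge R_1(f^{l+1})-R_2(f^{l+1})+\lambda^l\big(S_2(f^{l+1})-S_1(f^{l+1})\big)=R(f^{l+1})-\lambda^l S(f^{l+1}),
\]
so $R(f^{l+1})\le\lambda^l S(f^{l+1})$. Since $R\ge 0$ and $\lambda^l\ge 0$ this forces $S(f^{l+1})\ge 0$; the degenerate case $S(f^{l+1})=0$, which would render $Q(f^{l+1})$ ill-defined, is excluded by the standing assumption that $Q$ is finite along the iterates, so $S(f^{l+1})>0$, we may divide to get $Q(f^{l+1})\le\lambda^l=Q(f^l)$, and this also closes the induction $S(f^{l+1})>0$. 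Finally, the dichotomy: if $Q(f^{l+1})=Q(f^l)$ then $\lambda^{l+1}=\lambda^l$, so $|\lambda^{l+1}-\lambda^l|/\lambda^l=0<\epsilon$ and the UNTIL test in line~5 triggers, i.e.\ the sequence terminates; otherwise the inequality is strict and $Q(f^{l+1})<Q(f^l)$, which is exactly the claimed alternative.

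The step needing the most care is the bookkeeping around non-degeneracy — ensuring $f^{l+1}\neq 0$ and $S(f^{l+1})>0$ so that $Q(f^{l+1})$ is well-defined and $\lambda^{l+1}$ is a legitimate next parameter; everything else is the routine Euler-relation-plus-subgradient sandwich.
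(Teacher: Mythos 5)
Your proof is correct and follows essentially the same route as the paper's: use the Euler relation $\inner{f^l,r_2(f^l)}=R_2(f^l)$, $\inner{f^l,s_1(f^l)}=S_1(f^l)$ to show the inner objective vanishes at (a normalization of) $f^l$, then the subgradient inequality to bound $\Phi_l(f^{l+1})$ from below by $R(f^{l+1})-\lambda^l S(f^{l+1})$, and divide. The only differences are cosmetic: you normalize $f^l$ to make it feasible (the paper glosses over this via one-homogeneity) and you obtain the ``terminates or strictly decreases'' dichotomy from the stopping criterion rather than from strict negativity of the inner optimal value, which is how the paper gets both the strict inequality and, implicitly, $S(f^{l+1})>0$.
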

 \begin{proof}Let
 $ \Phi_{f^{l}}(u) := R_{1}(u) - \inner{u, r_{2}(f^{l})} + \lambda^{l} \big(S_{2}(u) - \inner{u, s_{1}(f^{l})}\big)$
 denote the objective of the inner problem.
  The optimal value of the inner problem is non-positive since 
  \begin{align*}
  \Phi_{f^{l}}(f^{l}) &= R_{1}(f^{l}) - \inner{f^{l}, r_{2}(f^{l})} \\
  & \hskip0.4cm + \lambda^{l} \big(S_{2}(f^{l}) - \inner{f^{l}, s_{1}(f^{l})}\big)\\
  			     &= R_{1}(f^{l}) - R_{2}(f^{l}) + \lambda^{l} \big(S_{2}(f^{l}) - S_{1}(f^{l})\big) = 0,
  \end{align*}
   where we used the fact that $\inner{f^{l}, r_{2}(f^{l})} = R_{2}(f^{l})$ and $\inner{f^{l}, s_{1}(f^{l})} = S_{1}(f^{l})$. 
   Since $\Phi_{f^{l}}$ is one-homogeneous, the minimum of $\Phi_{f^{l}}$ is always attained at the boundary of the constraint
    set. If the optimal value is zero, then $f^{l}$ is a possible minimizer and the sequence terminates. 
    Otherwise the optimal value is negative and at the optimal point we get 
    \begin{align*}
     0&   > \Phi_{f^{l}}(f^{l+1})   \\
     	&= R_{1}(f^{l+1}) - \inner{f^{l+1}, r_{2}(f^{l})} \\
     	& \hskip0.5cm + \lambda^{l} \big(S_{2}(f^{l+1}) - \inner{f^{l+1}, s_{1}(f^{l})}\big)\\
	&\ge R_{1}(f^{l+1}) - R_{2}(f^{l+1}) + \lambda^{l} \big(S_{2}(f^{l+1}) - S_{1}(f^{l+1})\big),
    \end{align*}
    where we used that for a positively one-homogeneous convex function one has for all $f, g \in \R^{n}_{+}$, 
    \[ S(f) \ge S(g) + \inner{f-g, s(g)} = \inner{f, s(g)}. \]  
    Thus we obtain 
    \[ Q(f^{l+1}) = \frac{R_{1}(f^{l+1}) - R_{2}(f^{l+1})} {S_{1}(f^{l+1}) - S_{2}(f^{l+1})}
    		      < \lambda^{l} = Q(f^{l}). \]
 \end{proof}
The norm constraint of the inner problem is necessary as otherwise the problem would be unbounded from below. However, the choice of the norm plays no role in the proof and any norm can be chosen.
  Moreover, in the special case where the one-homogeneous function $R$ is convex and $S$ is concave, 
the RatioDCA reduces to Dinkelbach's method from fractional programming \cite{Din1967} and therefore computes the global optimum. In the general case, convergence to the global optimum cannot be guaranteed.
However, 
we can provide a \textit{quality guarantee}: RatioDCA either improves a given feasible set or stops after one iteration.
\begin{theorem}\label{th:quality-guarantee}
Let $A$ be a feasible set and $\gamma > \widehat{R}(A) \max_{C \subset V} \widehat{S}(C)/(\theta\,\widehat{S}(A))$. 
Let $f^*$ denote the result of RatioDCA after initializing with the vector $\ones_A$, and let $C_{f^*}$ denote the set found by optimal thresholding of $f^*$. 
Either RatioDCA terminates after one iteration, or $C_{f^*}$ is feasible and $\frac{\widehat{R}(C_{f^*})}{\widehat{S}(C_{f^*})} \,< \,\frac{\widehat{R}(A)}{\widehat{S}(A)}$.
\end{theorem}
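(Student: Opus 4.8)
The plan is to combine the monotonicity of RatioDCA (Proposition~\ref{prop:decrease_algorithm}) with the constrained tight relaxation result (Theorem~\ref{th:setratio_cnstr}), using $\ones_A$ as the starting point. First I would observe that running RatioDCA on the continuous objective $Q_\gamma(f) = \frac{R(f)+\gamma T(f)}{S(f)}$ — which is exactly the continuous problem of Theorem~\ref{th:setratio_cnstr}, since $\widehat{T}$ is a non-negative set function vanishing on the empty set, so its Lovasz extension plugs into the d.c.\ framework — with initialization $f^0 = \ones_A$ gives $\lambda^0 = Q_\gamma(\ones_A) = \widehat{Q}_\gamma(A) = \widehat{R}(A)/\widehat{S}(A)$, where the last equality uses that $A$ is feasible, hence $\widehat{T}(A)=0$. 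The choice of $\gamma$ in the hypothesis is precisely the one required by Theorem~\ref{th:setratio_cnstr} with $C_0 = A$.

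Next I would split into the two cases of Proposition~\ref{prop:decrease_algorithm}. If the sequence terminates at the first step, there is nothing to prove — this is the first alternative in the statement. Otherwise, Proposition~\ref{prop:decrease_algorithm} gives $Q_\gamma(f^*) < Q_\gamma(f^0) = \widehat{Q}_\gamma(A)$ (strictly), where $f^*$ is the output. Now the ``moreover'' part of Theorem~\ref{th:setratio_cnstr} applies verbatim: since $Q_\gamma(f^*) < \widehat{Q}_\gamma(A)$ with the given $\gamma$, optimal thresholding of $f^*$ produces a set $C_{f^*}$ that is feasible, and moreover $Q_\gamma(f^*) \geq \min_{i} \widehat{Q}_\gamma(\Ct_i) = \widehat{Q}_\gamma(C_{f^*})$. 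Since $C_{f^*}$ is feasible, $\widehat{T}(C_{f^*})=0$, so $\widehat{Q}_\gamma(C_{f^*}) = \widehat{R}(C_{f^*})/\widehat{S}(C_{f^*})$. Chaining the inequalities gives $\frac{\widehat{R}(C_{f^*})}{\widehat{S}(C_{f^*})} = \widehat{Q}_\gamma(C_{f^*}) \leq Q_\gamma(f^*) < \widehat{Q}_\gamma(A) = \frac{\widehat{R}(A)}{\widehat{S}(A)}$, which is the desired strict improvement.

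The one point requiring a little care — and the main (minor) obstacle — is making sure the notion of ``optimal thresholding'' used to define $C_{f^*}$ in the theorem statement is the one with respect to the penalized objective $\widehat{Q}_\gamma$, i.e.\ $C_{f^*} = \argmin_{\Ct_i,\, i=1,\dots,n} \widehat{Q}_\gamma(\Ct_i)$, rather than with respect to $\widehat{R}/\widehat{S}$; this is exactly the thresholding analyzed in the ``moreover'' clause of Theorem~\ref{th:setratio_cnstr} and in Lemma~\ref{lemma:thresholding} applied to $(\widehat{R}+\gamma\widehat{T}, \widehat{S})$. Once $C_{f^*}$ is known to be feasible, the penalty term vanishes on it and the two notions of thresholded set coincide in value, so the statement is unambiguous. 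I would also note explicitly that $S(f^*) > 0$ and $\widehat{S}(A) > 0$ (the latter is assumed in Theorem~\ref{th:setratio_cnstr} and needed for $\lambda^0$ to be well-defined), so all ratios above are legitimate; this is routine and would be dispatched in a sentence.
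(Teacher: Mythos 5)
Your proposal is correct and follows essentially the same route as the paper: Proposition \ref{prop:decrease_algorithm} gives the strict decrease $Q_\gamma(f^*) < Q_\gamma(\ones_A) = \widehat{Q}_\gamma(A)$ unless the algorithm stops after one iteration, and then the thresholding inequality together with the penalty-based feasibility argument yields the claim. The only cosmetic difference is that you invoke the ``moreover'' clause of Theorem \ref{th:setratio_cnstr} as a packaged statement, whereas the paper applies Lemma \ref{lemma:thresholding} directly and re-runs the infeasibility contradiction in place; the underlying argument is identical, and your remark that the optimal thresholding must be taken with respect to $\widehat{Q}_\gamma$ is a correct and worthwhile clarification.
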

\begin{proof}
Proposition \ref{prop:decrease_algorithm} implies that the RatioDCA either directly terminates or produces a strictly monotonically decreasing sequence. In the latter case, using the strict monotonicity 
and the fact that thresholding does not increase the objective (Lemma \ref{lemma:thresholding}), we obtain
 \begin{align*}
 	\widehat{Q}_\gamma(A) = Q_\gamma(\ones_A)  
 	& \stackrel{Prop.\,\ref{prop:decrease_algorithm}}{>} Q_\gamma(f^*)\\
 	& \stackrel{Lemma\,\ref{lemma:thresholding}}{\geq}    Q_\gamma(\ones_{C_{f^*}}) = \widehat{Q}_\gamma(C_{f^*}) \ . 
 \end{align*}
Assume now that $C_{f^*}$ is infeasible. Then, one can derive analogously to the proof of Theorem \ref{th:setratio_cnstr} that $\widehat{Q}_\gamma(C_{f^*}) \geq \frac{\gamma \theta}{\max_{C\subset V} \widehat{S}(C)} > \widehat{Q}(A)=\widehat{Q}_\gamma(A)$, which is a contradiction to $\widehat{Q}_\gamma(A) > \widehat{Q}_\gamma(C_{f^*})$. Hence, $C_{f^*}$ has to be feasible and it holds that $\widehat{Q}(A)=\widehat{Q}_\gamma(A) > \widehat{Q}_\gamma(C_{f^*}) = \widehat{Q}(C_{f^*})$.
\end{proof} 
The above theorem implies that all constraints of the original constrained fractional set program are fulfilled by the set $C_{f^*}$ returned by RatioDCA.


\section{Tight relaxations of constrained maximum density and constrained balanced graph cut problems}\label{sec:max-density}

The framework introduced in this paper allows us to derive tight relaxations of all problems discussed in Section \ref{sec:setratio_examples}.
In the following, we will derive a tight relaxation of the local community detection problem 
\begin{align} \label{eq:maxdens_general_only_upper}
   \max_{C \subset V} &\ \frac{\assoc(C)}{\vol_g(C)}\\ 
  \subj:  &\,\vol_h(C) \le k,  \; \textrm{ and }\;J \subset C.	\nonumber
  \end{align}
For the constrained balanced graph cut problem, the tight relaxation can be found in a very similar way and is thus omitted here. 

First, we integrate the volume constraint via a penalty term, see \eqref{mod_problem}, which yields the equivalent problem
\begin{align} \label{maxdens_seed_constraint}
& \min_{\stackrel{C \subset V}{{\rm s.\hskip-0.02cm t.}\hskip-0.02cm J \subset C}} 
\frac{\vol_g(C) + \gamma \widehat{T}_k(C)}{\assoc(C)},
\end{align}
where $\widehat{T}_k$ is given as
$\widehat{T}_k(C)  =	\max\left\{ 0,\vol_h(C)-k\right\} $
and $\gamma \hspace{-0.01cm} > \hspace{-0.01cm}\frac{\vol_g(C_{0})\vol(V)} {\theta\,\assoc(C_{0})}$ for a feasible set $C_0\subset V$.
Note that the penalty term is equal to 
 $ \widehat{T}_k(C)  =	\vol_h(C) -  \min\left\{ k,\vol_{h}(C)\right\} ,$
which is a difference of submodular functions.

We could reformulate the seed constraint $J \subset C$ as inequality constraint $|J \cap C| - |J| \ge 0$ and add a similar penalty function to the numerator of \eqref{maxdens_seed_constraint}. However, 
using the structure of the problem, a more direct way to incorporate the seed constraint is possible. It holds that \eqref{maxdens_seed_constraint} has the equivalent form
\begin{align} \label{maxdens_unconstrained}
 \min_{A \subset V \backslash J}  \frac{\vol_g(A) + \vol_g(J) + \gamma \widehat{T}_{k'}(A)}{\assoc(A)+\assoc(J)
 +2\cut(J,A)},
\end{align}
where $k'=k-\vol_h(J)$. 
Solutions $C^*$ of \eqref{maxdens_seed_constraint} and $A^*$ of \eqref{maxdens_unconstrained} are related via $C^* = A^* \cup J$.
In order to derive 
the tight relaxation via Theorem \ref{thm:ratio_setfunct},
we need the Lovasz extension of the 
set functions 
in \eqref{maxdens_unconstrained}. 
 For technical reasons, we replace the constant set functions $\vol_g(J)$ and $\assoc(J)$ 
 by $\vol_g(J)\widehat{P}(A)$ and $\assoc(J)\widehat{P}(A)$,
  respectively, 
  where $\widehat{P}$ is defined as $\widehat{P}(A) = 1$ for $A\neq \emptyset$ and $\widehat{P}(\emptyset) = 0$. 
 This leads to
 the problem
\begin{align} \label{maxdens_unconstrained_mod}
 \min_{A \subset V \backslash J}  \frac{\vol_g(A) + \vol_g(J) \widehat{P}(A) + \gamma \widehat{T}_{k'}(A)}{\assoc(A)+\assoc(J) \widehat{P}(A)
 +2\cut(J,A)}.
\end{align}
The only difference to \eqref{maxdens_unconstrained} lies in the treatment of the empty set. Note that with $\frac{0}{0}:=\infty$ the empty set can never be optimal for problem \eqref{maxdens_unconstrained_mod}. Given an optimal solution $A^*$ of \eqref{maxdens_unconstrained_mod}, one then either considers either $A^* \cup J$ or $J$, depending on whichever has lower objective, which then implies equivalence to \eqref{maxdens_unconstrained}.

The resulting tight relaxation will be a minimization problem over $\R^m$ with $m = |V\backslash J|$ and we assume wlog that the first $m$ vertices of $V$ are the ones in $V\backslash J$. Moreover, we use the notation $f_{\rm max} = \max_{i=1,\dots,m}f_i$ for $f\in \R^m$, and $d^{(A)}_i = \sum_{j\in A} w_{ij}$. The following Lovasz extensions are useful:
\newline
\vspace{-4mm}
\begin{table}[h!]
\label{tab:lovasz_extensions}
\begin{center}
\begin{small}
\begin{sc}
\begin{tabular}{c|c}
Set function & Lovasz extension \\
\hline
$\cut(A,\overline{A})$ & $\frac{1}{2}\sum_{i,j}^m w_{ij} |f_i -f_j|$\\
\hline
$\vol_g(A)$ & $\inner{f,(g_i)_{i=1}^m}$\\
\hline
$\assoc(A)$ & $\inner{f,(d^{(V\backslash J)}_i)_{i=1}^m} - \frac{1}{2}\sum_{i,j}^m w_{ij} |f_i -f_j|$\\
\hline
$\widehat{P}(A)$ & $f_{\rm max}$ \\
\hline
$\widehat{T}_{k'}(A)$ & $\inner{f,(h_i)_{i=1}^m} - T^{(2)}_{k'}(f)$ 
\end{tabular}
\end{sc}
\end{small}
\end{center}
\end{table}
\vspace{-4mm}
\newline
For the sake of brevity, we do not specify the convex function $T^{(2)}_{k'}$. 
Recall from  Section \ref{sec:alg}
that we need only an element of the subdifferential for $T^{(2)}_{k'}$ which by Prop.\;2.2 in \citet{Bach11} is given by 
\begin{align*}
\big(t^{(2)}_{k'}(f)\big)_{j_i} \hskip-0.1cm &=  \hskip-0.1cm \left\{\begin{array}{ll} \hskip-0.1cm 0 & \vol_h(A_{i+1}) >k'\\
\hskip-0.1cm k' - \vol_h(A_{i+1}) & \vol_h(A_{i}) \geq k' ,\\
& \hskip 0.5cm \vol_h(A_{i+1}) \leq k'\\
	\hskip-0.1cm h_{j_i} & \vol_h(A_i)<k'	
		\end{array} \right. ,
\end{align*}
where $j_i$ denotes the index of the $i$-th smallest component of the vector $f$. 
The above Lovasz extensions lead to the following tight relaxation of \eqref{maxdens_unconstrained_mod}:
\begin{align} \label{maxdens_tight_relaxation} 
   	 	\min_{f \in \R^{m}_{+}} 
 			\frac{ R_1(f) - R_2(f)}
 			 {S_1(f)  - S_2(f)} ,
\end{align}
where 
$R_1(f) = \inner{(g_i)_{i=1}^m +\gamma(h_i)_{i=1}^m,f} + \vol_g(J) f_{\rm max}$,
$S_1(f) = \langle (d_i)_{i=1}^m + (d_i^{(J)})_{i=1}^m,f \rangle + \assoc(J) \,f_{\rm max}$,
$R_2(f) = \gamma T^{(2)}_{k'}(f)$ and $S_2(f) = \frac{1}{2}\sum_{i,j}^m w_{ij} |f_i -f_j|.$

\paragraph{Lower bound constraints.} Constraints of the form $\vol_{h}(C) \ge k$ are rewritten as $-\vol_{h}(C) \le -k$, which leads to the penalty term, see \eqref{eq:penality_functions},
 \begin{align*}
 \widehat{T}_k(C) & =\left\{ \begin{array}{cc}
 									\max\left\{ 0,k-\vol_{h}(C)\right\}, & C \neq \emptyset,\\
 									0, & C=\emptyset.\\
 									\end{array} \right. 	 
\end{align*}
The decomposition
$\widehat{T}_k(C)  =	k\ \widehat{P}(C) -  \min\left\{ k,\vol_{h}(C) \right\}$ 
then again yields a difference of submodular functions (noting $k\geq 0$). The derivation then proceeds analogously to the case of upper bound constraints.

\paragraph{Solution via RatioDCA.}

Observe that both numerator and denominator of the tight relaxation \eqref{maxdens_tight_relaxation} are one-homogeneous d.c.\,functions and thus we can apply the RatioDCA of Section \ref{sec:alg}. The crucial step in the algorithm is solving the inner problem (line 3). For both \eqref{maxdens_tight_relaxation} and the tight relaxation of the constrained balanced graph cut problem, it has the form
\begin{align} \label{maxdens_inner_problem}
\min_{\substack{f \in \R^m_+\\ \|f\|_2\le 1}} \{ c_1 f_{\max}  + \inner{f,c_2}  + \lambda^l \frac{1}{2}\sum_{i,j}^m w_{ij} |f_i -f_j| \},
\end{align}
for $c_1 \in \R$ and $c_2 \in \R^m$. We solve this problem 
via the following equivalent dual problem.

\begin{lemma}\label{lem:inner_problem}
The inner problem \eqref{maxdens_inner_problem} is equivalent to
\begin{align*}
- \min_{\substack{\norm{\alpha}_\infty\leq 1\\ \alpha_{ij}=-\alpha_{ji}}} 
	 \min_{v \in S_m} 
		\frac{1}{2}\norm{P_{\R_+^m} \left( - c_1 v -c_2 -\frac{\lambda^l}{2} A \alpha \right)}_2^2
\end{align*}
where $(A\alpha)_i := \sum_{j} w_{ij} (\alpha_{ij}-\alpha_{ji})$, $P_{\R_+^m}$ denotes the projection on the positive orthant
and $S_m$ is the simplex $S_m=\{v \in \R^m \,|\, v_i\geq 0, \sum_{i=1}^m v_i=1\}$.
\end{lemma}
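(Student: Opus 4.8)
The strategy is to pass from the primal inner problem \eqref{maxdens_inner_problem} to the stated dual by rewriting each of the three nonsmooth terms in its variational (conjugate) form, then exchanging the order of minimization over $f$ with the maximization over the dual variables. First I would handle the term $c_1 f_{\max}$: since $f \in \R^m_+$, we have $f_{\max} = \max_{i} f_i = \max_{v \in S_m}\inner{v,f}$, introducing the simplex variable $v \in S_m$. Next, the total-variation-type term $\tfrac12\sum_{i,j} w_{ij}|f_i-f_j|$ is written as $\max_{\|\alpha\|_\infty\le 1,\ \alpha_{ij}=-\alpha_{ji}} \tfrac12\sum_{i,j} w_{ij}\alpha_{ij}(f_i-f_j) = \max_\alpha \inner{f, \tfrac12 A\alpha}$, using the definition $(A\alpha)_i = \sum_j w_{ij}(\alpha_{ij}-\alpha_{ji})$ and the antisymmetry of $\alpha$ (one checks $\tfrac12\sum_{i,j}w_{ij}\alpha_{ij}(f_i-f_j) = \sum_i f_i \cdot\tfrac12(A\alpha)_i$). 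The linear term $\inner{f,c_2}$ is left as is. After these substitutions the objective becomes $\max_{v\in S_m,\ \alpha}\big( \inner{f, c_1 v + c_2 + \tfrac{\lambda^l}{2}A\alpha}\big)$, to be minimized over $f\in\R^m_+$ with $\|f\|_2\le 1$.

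Second, I would invoke a minimax / Sion-type argument to swap $\min_f \max_{v,\alpha}$ into $\max_{v,\alpha}\min_f$. This is legitimate because the objective is bilinear (hence convex-concave) in $f$ and $(v,\alpha)$, and the feasible sets — the intersection of $\R^m_+$ with the unit Euclidean ball for $f$, the simplex $S_m$ for $v$, and the $\ell_\infty$-ball with antisymmetry for $\alpha$ — are all compact and convex. Once the order is swapped, the inner minimization is $\min_{f\in\R^m_+,\ \|f\|_2\le 1}\inner{f, b}$ where $b := c_1 v + c_2 + \tfrac{\lambda^l}{2}A\alpha$. This is a standard projection computation: the minimum of $\inner{f,b}$ over $\{f\ge 0, \|f\|_2\le 1\}$ equals $-\|P_{\R^m_+}(-b)\|_2$, since only the negative part of the coordinates of $b$ (equivalently, the positive part of $-b$) can be exploited, and one pushes $f$ to unit norm in that direction. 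Plugging this in gives $\max_{v,\alpha}\big(-\|P_{\R^m_+}(-c_1 v - c_2 - \tfrac{\lambda^l}{2}A\alpha)\|_2\big)$; rewriting $\max(-x) = -\min x$ and squaring (the square is monotone on $\R_+$, and one absorbs the harmless factor $\tfrac12$) yields exactly the claimed expression $-\min_{\alpha,v}\tfrac12\|P_{\R^m_+}(-c_1 v - c_2 - \tfrac{\lambda^l}{2}A\alpha)\|_2^2$. Finally, a sign flip on $\alpha$ (replacing $\alpha$ by $-\alpha$, which preserves the constraint set) turns $+\tfrac{\lambda^l}{2}A\alpha$ into $-\tfrac{\lambda^l}{2}A\alpha$, matching the sign in the statement.

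The main obstacle is making the minimax exchange and the final scalar manipulation fully rigorous rather than formal: one must verify the convex-concave structure and compactness hypotheses of the minimax theorem on the correct domains, and — more delicately — justify that replacing $-\|P_{\R^m_+}(\cdot)\|_2$ by $-\tfrac12\|P_{\R^m_+}(\cdot)\|_2^2$ does not change the set of maximizers/minimizers when the inner and outer optimizations are coupled. Since $t \mapsto -\tfrac12 t^2$ is a strictly increasing transformation of $t\mapsto -t$ on $t\ge0$, and the norm term is nonnegative, this monotone reparametrization preserves the argmin over $(\alpha,v)$, so the optimal value and the optimal $(\alpha,v)$ are unaffected; this is the point that needs to be stated carefully. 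The coordinatewise projection identity $\min_{f\ge 0,\|f\|_2\le 1}\inner{f,b} = -\|(-b)_+\|_2$ is elementary (optimize coordinatewise direction, then allocate norm), so I would state it as a one-line observation rather than belabor it.
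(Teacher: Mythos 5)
Your proof is essentially correct but takes a different route from the paper's. The paper does not dualize the norm-constrained problem directly; instead it first replaces the constraint $\norm{f}_2\le 1$ by an additive regularizer, passing to $\min_{f\in\R^m_+}\Phi(f)+\tfrac12\norm{f}_2^2$ and invoking the one-homogeneity of $\Phi$ to recover the original solution as $f^*/\norm{f^*}_2$. After the same conjugate rewrites of $f_{\max}$ and the total-variation term and the same min--max exchange, the inner problem in $f$ is then an unconstrained quadratic over $\R^m_+$, solved exactly by $f=P_{\R_+^m}(-c_1v-c_2-\tfrac{\lambda^l}{2}A\alpha)$, and substituting back via $\inner{P_{\R_+^m}(x),x}=\norm{P_{\R_+^m}(x)}_2^2$ produces the quadratic dual objective $-\tfrac12\norm{P_{\R_+^m}(\cdot)}_2^2$ \emph{with exact value equality} to the regularized problem. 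Your route keeps the ball constraint and obtains $-\norm{P_{\R_+^m}(\cdot)}_2$, then argues that squaring is a monotone reparametrization. That does preserve the minimizing $(\alpha,v)$ (and hence the recovered $f$), which is all that is needed for the algorithmic use of the lemma, but note that your claim that ``the optimal value \ldots [is] unaffected'' is not right: $-t$ and $-\tfrac12 t^2$ generally differ, so your dual and the stated dual agree in argmin but not in value. The paper's regularization trick is precisely what makes the stated quadratic form drop out without this extra monotonicity argument. Two minor further points: the final sign flip $\alpha\mapsto-\alpha$ you propose is unnecessary, since with $b=c_1v+c_2+\tfrac{\lambda^l}{2}A\alpha$ the argument of the projection is already $-b=-c_1v-c_2-\tfrac{\lambda^l}{2}A\alpha$; and your justification of the min--max exchange via compactness and bilinearity is fine (the paper leaves this implicit).
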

\begin{proof}
First we replace the inner problem \eqref{maxdens_inner_problem} by the modified problem
\begin{align}\label{eq:IP2} 
\min_{f\in \R^m_+} \frac{\lambda^l}{2}\hspace{-0.15cm}\sum_{i,j=1}^m w_{ij}|f_i-f_j| +
c_1 \max_i f_i  + \inner{f,c_2} +\frac{1}{2}\norm{f}_2^2 .
\end{align}
Given a solution $f^ *$ of \eqref{eq:IP2}, a solution of \eqref{maxdens_inner_problem} can be obtained via $f^*/\norm{f^ *}_2$, which can be shown using the $1$-homogeneity of the objective \eqref{maxdens_inner_problem}.
We then derive the dual problem as follows:
\begin{align*}
		 & \min_{f\in \R_+^m}\hspace*{-0.1cm}\frac{\lambda^l}{2}\hspace*{-0.1cm}\sum_{i,j=1}^{m} w_{ij} \abs{f_{i} - f_{j}} + c_1 \max f_i + \inner{f,c_2}  + \frac{1}{2} \norm{f}_2^2 \\
		 & = \min_{f\in \R_+^m} \Big\{\max_{\substack{\norm{\alpha}_\infty\leq 1\\ \alpha_{ij}=-\alpha_{ji}}}\frac{\lambda^l}{2}\sum_{i,j=1}^{m} w_{ij} \left(f_{i} - f_{j}\right) \alpha_{ij}    \\
		 & \hspace{1.5cm}
		   +   \max_{v \in S_m} c_1 \inner{f,v} + \inner{f,c_2} + \frac{1}{2} \norm{f}_2^2 \Big\}  \\
		 & = \max_{\substack{\norm{\alpha}_\infty\leq 1 \\ \alpha_{ij}=-\alpha_{ji} \\ v \in S_m}} 
		 	\min_{f\in \R_+^m} \frac{1}{2} \norm{f}_2^2 + \inner{f, c_1 v + c_2 + \frac{ \lambda^l}{2} A\alpha },		  
\end{align*}
where $(A\alpha)_i := \sum_{j} w_{ij} (\alpha_{ij}-\alpha_{ji})$.
The optimization over $f$ has the solution
\[
	f= P_{\R_+^m} \left( - c_1 v -c_2 - \frac{\lambda^l}{2} A \alpha \right).
\]
Plugging $f$ into the objective and using that $\inner{P_{\R_+^m}(x),x} = \norm{P_{\R_+^m}(x)}_2^2$, we obtain the result.
\end{proof}
This dual problem can be solved efficiently using FISTA \cite{BT09}, a proximal gradient method with guaranteed convergence rate $O(\frac{1}{k^2})$ where $k$ is the number of steps. 
The resulting explicit steps in FISTA with $B_\infty(1)=\{x \in \R\,|\, |x| \le 1\}$ to solve the inner problem are given below.
\floatname{algorithm}{}
\begin{algorithm}[htb] \label{alg:fista}
   \renewcommand{\thealgorithm}{}
   \caption{\textbf{FISTA} for the inner problem}
\begin{algorithmic}[!htb]
   \STATE {\bfseries Input:} Lipschitz constant $L$ of $\nabla \Psi$,
   \STATE {\bfseries Initialization:} $t_1=1$, $\alpha^1 \in \R^{|E|}$,
   \REPEAT
   \STATE            
   		 $v  = \argmin_{u \in S_m} \norm{P_{\R_+^m} \left(  - c_1 u -c_2 -\frac{\lambda^l}{2} A \alpha\right)}_2^2$\\
   		 $z = P_{\R_+^m} \left(- c_1 v - c_2 - \frac{\lambda^l}{2} A \alpha \right)$\\
       $\beta^{k+1}_{rs} 
                           =P_{B_\infty(1)}\left(\alpha^k_{rs} + \frac{1}{L}\lambda^l w_{rs}\big( z_r-z_s\big)\right)$
   \STATE $t_{k+1} = \frac{1+ \sqrt{1+4 t_k^2}}{2}$,
   \STATE $\alpha^{k+1}_{rs} = \beta^{k+1}_{rs} + \frac{t_k-1}{t_{k+1}}\Big(\beta^{k+1}_{rs} - \beta^k_{rs}\Big)$.
   \UNTIL {duality gap $<\epsilon$}
\end{algorithmic}
\end{algorithm}

The most expensive part
of each iteration of the algorithm is a sparse matrix multiplication, which scales linearly in the number of edges.
To solve the first subproblem in FISTA, we make use of the following fact:
\begin{lemma}\label{lem:projection}
Let $x \in \R^n$ and $y:=P_{\R_+^n}(x)$, then 
$\argmin_{v \in {S_n}} 
		\norm{y -  v}_2^2 
  \in \argmin_{v \in S_n} 
				\norm{P_{\R_+^n} \left( x -  v\right)}_2^2$.
\end{lemma}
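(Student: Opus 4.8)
The plan is to peel off the dependence on $x$ first and then treat the simplex projection directly. Since every $v\in S_n$ has $v_i\ge 0$, a componentwise check shows $\max\{x_i-v_i,0\}=\max\{y_i-v_i,0\}$ for all $i$: if $x_i\ge 0$ then $y_i=x_i$ and the two sides coincide, while if $x_i<0$ then $y_i=0$ and both sides vanish because $x_i-v_i<0$ and $y_i-v_i=-v_i\le 0$. Hence $P_{\R_+^n}(x-v)=P_{\R_+^n}(y-v)$ for every $v\in S_n$, so the functions $v\mapsto \norm{P_{\R_+^n}(x-v)}_2^2$ and $v\mapsto \norm{P_{\R_+^n}(y-v)}_2^2$ agree on the simplex and therefore have the same set of minimizers over $S_n$. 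This reduces the claim to showing that $v^*:=\argmin_{v\in S_n}\norm{y-v}_2^2$, the unique Euclidean projection of the \emph{nonnegative} vector $y$ onto $S_n$, also minimizes $g(v):=\norm{P_{\R_+^n}(y-v)}_2^2=\sum_i\max\{y_i-v_i,0\}^2$ over $S_n$.

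For this reduced statement I would use the closed form of the simplex projection: $v^*_i=\max\{y_i-\tau,0\}$, where $\tau\in\R$ is the unique number with $\sum_i\max\{y_i-\tau,0\}=1$. A short computation gives $y_i-v^*_i=\min\{y_i,\tau\}$, hence $\max\{y_i-v^*_i,0\}=\min\{\tau^+,y_i\}$ with $\tau^+:=\max\{\tau,0\}$ (this is where $y\ge 0$ enters). Writing $\phi(t):=\max\{t,0\}^2$, which is convex and differentiable with $\phi'(t)=2\max\{t,0\}$, we have $g(v)=\sum_i\phi(y_i-v_i)$, and applying the first-order inequality $\phi(a)\ge\phi(b)+\phi'(b)(a-b)$ at $b=y_i-v^*_i$, $a=y_i-v_i$ and summing over $i$ gives, for every $v\in S_n$,
\[
 g(v)-g(v^*)\;\ge\;2\sum_i\max\{y_i-v^*_i,0\}\,(v^*_i-v_i)\;=\;2\sum_i\min\{\tau^+,y_i\}\,(v^*_i-v_i).
\]
It then remains to verify $\sum_i\min\{\tau^+,y_i\}\,v^*_i\ge\sum_i\min\{\tau^+,y_i\}\,v_i$. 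For the left-hand side, on the support of $v^*$ one has $y_i>\tau\ge 0$, so $\min\{\tau^+,y_i\}=\tau$ and $v^*_i=y_i-\tau$; together with $\sum_i v^*_i=1$ this yields $\sum_i\min\{\tau^+,y_i\}\,v^*_i=\tau\sum_{i:\,y_i>\tau}(y_i-\tau)=\tau=\tau^+$ (and the sum is $0=\tau^+$ when $\tau\le 0$). For the right-hand side, $\min\{\tau^+,y_i\}\le\tau^+$, $v_i\ge 0$ and $\sum_i v_i=1$ give $\sum_i\min\{\tau^+,y_i\}\,v_i\le\tau^+$. Hence $g(v)\ge g(v^*)$ for all $v\in S_n$, which combined with the first reduction proves the lemma.

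The only real obstacle is the reduced statement of the second paragraph; the cheap bound $\norm{P_{\R_+^n}(y-v)}_2^2\le\norm{y-v}_2^2$ is not enough, since it compares the two objectives rather than two feasible points of the clipped objective $g$. The key is to exploit the \emph{explicit threshold} $\tau$ produced by the simplex projection so that the first-order term collapses, via $\sum_i v_i=1$ and $v_i\ge 0$, to the single scalar inequality $\sum_i\min\{\tau^+,y_i\}\,v_i\le\tau^+$. The degenerate regime $\tau\le 0$ (equivalently $\sum_i y_i\le 1$) needs no separate treatment: there $\min\{\tau^+,y_i\}=0$ for all $i$, the bound is trivial, and indeed $g(v^*)=0$ is already the global minimum.
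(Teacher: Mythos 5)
Your proof is correct. Note that the paper does not actually carry out this argument: its ``proof'' is the single sentence that the lemma follows from a transformation of the KKT conditions of the left problem into those of the right problem. Your route realizes the same underlying idea (first-order optimality over the simplex) but executes it concretely and, in one respect, more cleanly. The opening reduction $P_{\R_+^n}(x-v)=P_{\R_+^n}(y-v)$ for all $v\in S_n$ is a genuine simplification not hinted at in the paper: it eliminates $x$ entirely and reduces everything to the nonnegative vector $y$. From there, instead of manipulating KKT multipliers abstractly, you use the explicit water-filling form $v^*_i=\max\{y_i-\tau,0\}$ of the simplex projection and the gradient inequality for the convex $C^1$ function $\phi(t)=\max\{t,0\}^2$ to get the global bound $g(v)-g(v^*)\ge 2\sum_i\min\{\tau^+,y_i\}(v^*_i-v_i)$; the identity $\sum_i\min\{\tau^+,y_i\}\,v^*_i=\tau^+$ together with $\sum_i\min\{\tau^+,y_i\}\,v_i\le\tau^+$ then makes this nonnegative, and the degenerate case $\tau\le 0$ is handled. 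All steps check out (in particular $\max\{y_i-v^*_i,0\}=\min\{\tau^+,y_i\}$ does use $y\ge 0$, exactly where you say it does). What the paper's unwritten KKT route would buy is independence from the closed-form projection formula; what yours buys is a complete, self-contained, and verifiable proof of a claim the paper leaves as an assertion.
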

\begin{proof}
The proof is a straightforward but technical transformation of the KKT optimality conditions of the left problem
into the ones of the right problem.
\end{proof}
Lemma \ref{lem:projection} implies that the minimization problem can be solved via a standard projection onto the simplex, which can be computed in linear time \cite{Kiwiel2007}.

\paragraph{Unconstrained version.} 
In the unconstrained case of the maximum density problem, the tight relaxation \eqref{maxdens_tight_relaxation} reduces to a convex-concave ratio. As remarked in Section \ref{sec:alg} it can then be solved globally optimally with our method, which in this case is equivalent to Dinkelbach's method \cite{Din1967}.   In every iteration, we have to solve
\begin{equation} \label{eq:dinkelbach}
\min_{\substack{f \in \R^n_+\\ \|f\|_\infty\le 1}} \{ \inner{g,f} - \lambda \inner{d,f} 
+ \frac{\lambda}{2} \sum_{i,j=1}^n w_{ij} |f_i -f_j| \}.
\end{equation}
Note that here we used the fact that one can replace the $L_2$ norm constraint in the inner problem by a $L_\infty$ norm constraint, see 
the remark after Prop.\;\ref{prop:decrease_algorithm}.
The following lemma shows that \eqref{eq:dinkelbach} can be rewritten as a $s$-$t$-min-cut-problem, which shows that the procedure is similar to the method of \citet{Gol84}.
\begin{lemma}
Problem \eqref{eq:dinkelbach} is equivalent to the problem
\[ \min_{f_V \in H, \, f_s=1, \, f_t=0} \frac{1}{2}\sum_{i,j \in V'} w'_{ij} |f_i-f_j|,\]
with $V'=V \cup \{s,t\}$, $H:= \left\{ u \in \R^n_+, \norm{u}_\infty \leq 1\right\}$ and some non-negative weights $w'_{ij}$, $i,j\in V'$.
\end{lemma}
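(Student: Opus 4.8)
The plan is to move the two linear terms $\inner{g,f}-\lambda\inner{d,f}$ of \eqref{eq:dinkelbach} into additional cut terms that connect each vertex $i\in V$ to a new source node $s$, pinned at $f_s=1$, or to a new sink node $t$, pinned at $f_t=0$. The point that makes this exact (rather than merely a relaxation) is that on the feasible set every coordinate satisfies $f_i\in[0,1]$, so $|f_s-f_i|=1-f_i$ and $|f_i-f_t|=f_i$ hold with equality; thus an edge $\{s,i\}$ of weight $\beta$ contributes $\beta(1-f_i)$ and an edge $\{i,t\}$ of weight $\beta$ contributes $\beta f_i$ to the cut objective, i.e.\ exactly a linear term in $f_i$ up to a constant.

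Concretely, I would first write the linear part as $\sum_{i\in V}c_i f_i$ with $c_i:=g_i-\lambda d_i$, and split $c_i=c_i^+-c_i^-$ into positive and negative parts. Then set $V'=V\cup\{s,t\}$ and define the weights $w'_{ij}:=\lambda\,w_{ij}$ for $i,j\in V$, $w'_{si}=w'_{is}:=c_i^-$, $w'_{it}=w'_{ti}:=c_i^+$, and $w'_{st}=w'_{ts}:=0$. These are all non-negative: $\lambda\ge 0$ since $\lambda=Q(f^l)$ is a non-negative ratio along the iterates, and $c_i^{\pm}\ge0$ by construction — this is precisely why the split into positive and negative parts is used. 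Evaluating $\tfrac12\sum_{i,j\in V'}w'_{ij}|f_i-f_j|$ under $f_s=1$, $f_t=0$, $f_V\in H$, the $V\times V$ block gives $\tfrac{\lambda}{2}\sum_{i,j\in V}w_{ij}|f_i-f_j|$, the sink edges give $\sum_i c_i^+ f_i$, and the source edges give $\sum_i c_i^-(1-f_i)=\sum_i c_i^- -\sum_i c_i^- f_i$. Summing, this equals the objective of \eqref{eq:dinkelbach} plus the constant $\sum_{i\in V}c_i^-$, which does not depend on $f$. Since the feasible set $\{f_V\in H,\ f_s=1,\ f_t=0\}$ is in obvious bijection with $\{f\in\R^n_+,\ \|f\|_\infty\le 1\}$ (drop, resp.\ append, the $s,t$ coordinates), the two problems have identical minimizers and optimal values differing by this constant, which is the claimed equivalence.

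I do not expect a genuine obstacle here: the content is the linearization trick above, and the remainder is bookkeeping — namely deciding which vertices get a source edge versus a sink edge, and checking the resulting weights are non-negative. The only thing worth double-checking is the normalization convention (the factor $\tfrac12$ on the ordered double sum) so that the coefficients come out as $c_i^{\pm}$ and $\lambda w_{ij}$ without spurious factors of two. As a closing remark one can add that, because $w'$ is non-negative, the displayed objective is the Lovasz extension of the (submodular) cut function of $G'=(V',w')$, so by Proposition~\ref{prop:convexity_submodularity} minimizing it over $H$ with $f_s=1$, $f_t=0$ is exactly an $s$--$t$ minimum cut problem, which is the connection to \citet{Gol84}.
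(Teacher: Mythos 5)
Your proof is correct and takes essentially the same route as the paper: both move the linear terms into source/sink edges by exploiting $f_i=|f_i-0|$ and $1-f_i=|1-f_i|$ on $H$, at the cost of an additive constant independent of $f$. The only (immaterial) difference is that the paper keeps the two individually non-negative coefficients separate---attaching weight proportional to $\lambda d_i$ to the edge $\{s,i\}$ and to $g_i$ to the edge $\{i,t\}$, so no constant beyond $\lambda\sum_i d_i$ is needed---whereas you first combine them into $c_i=g_i-\lambda d_i$ and split by sign, which gives one auxiliary edge per vertex instead of two but requires the positive/negative-part decomposition to keep the weights non-negative.
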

\begin{proof}
Note that adding constant terms to the objective does not change the minimizer. We rewrite
\begin{align*}
 &\sum_{i=1}^n g_i(f_i \hskip-0.05cm - \hskip-0.05cm 0)\hskip-0.08cm + \hskip-0.08cm \lambda \hskip-0.08cm \sum_{i=1}^n d_i \hskip-0.08cm - \hskip-0.08cm \lambda  \hskip-0.08cm \sum_{i=1}^n d_i f_i \hskip-0.08cm + \hskip-0.08cm\frac{\lambda}{2} \hskip-0.08cm \sum_{i,j=1}^n w_{ij} |f_i -f_j|\\
=&\sum_{i=1}^n g_i|f_i-0| +\lambda \sum_{i=1}^n d_i |1-f_i| + \frac{\lambda}{2} \sum_{i,j=1}^n w_{ij} |f_i -f_j|,
\end{align*}
where we have used that $f \in H$, where $H:= \left\{ u \in \R^n_+, \norm{u}_\infty \leq 1\right\}$. 
We define the graph as $V'=V \cup \{s,t\}$ and the weight matrix $W'$ with
\[ w'_{ij} = \left\{\begin{tabular}{ll} $\lambda \,w_{ij}$ & if $i,j \in V$,\\ $2\lambda d_j$ & if $i=s$ and $j \in V$,\\ $2 g_i$ & if $i \in V$ and $j=t$, \end{tabular}\right.\]
and can rewrite the problem as
\[ \min_{f_V \in H, \, f_s=1, \, f_t=0} \frac{1}{2}\sum_{i,j \in V'} w'_{ij} |f_i-f_j|,\]
which is a $s$-$t$-mincut.
\end{proof}
The above problem can be efficiently solved, e.g., using the pseudo-flow algorithm of \citet{Hoc2008}.


\begin{table*}[tb!]
\caption{Results for the constrained local normalized cut.
Our solutions (CFSP) always satisfy all constraints and have smaller cuts than the two competing methods LS and LRW.}
\label{tab:results_ncut_short}
\vspace{-1mm}
\begin{center}
\begin{tiny}
\begin{sc}
\begin{tabular}{@{\extracolsep{-2pt}}cccccccc}
                 & Method    & $\le 20\%$      & $\le 40\%$      & $\le 60\%$      & $\le 80\%$      & $\le 100 \%$    & Runtime\\
\hline
CA-GrQc        & LRW       & 0.1311 (0.0686) & 0.1005 (0.0542) & 0.0984 (0.0543) & 0.0920 (0.0439) & 0.0773 (0.0341) & 2\\
(4158,13422)    & LRW+CFSP  & 0.1048 (0.0486) & 0.0695 (0.0318) & 0.0614 (0.0268) & 0.0614 (0.0268) & 0.0457 (0.0217) & 2 + 3\\
                & LS        & 0.2014 (0.0958) & 0.1182 (0.0958) & 0.0685 (0.1089) & 0.0314 (0.0423) & 0.0217 (0.0259) & 6\\
                & LS+CFSP   & 0.1366 (0.0914) & 0.0709 (0.0592) & 0.0340 (0.0494) & 0.0200 (0.0270) & 0.0147 (0.0120) & 6 + 3\\
                & CFSP      & \textbf{0.0315 (0.0292)} & \textbf{0.0157 (0.0131)} & \textbf{0.0138 (0.0115)} & \textbf{0.0083 (0.0055)} & \textbf{0.0069 (0.0044)} & 31\\
\hline
CA-HepTh       & LRW       & 0.2607 (0.0914) & 0.2157 (0.0533) & 0.2015 (0.0498) & 0.1954 (0.0491) & 0.1888 (0.0483) & 9\\
(8638,24806)    & LRW+CFSP  & 0.2074 (0.1003) & 0.1076 (0.0561) & 0.0976 (0.0452) & 0.0882 (0.0305) & 0.0869 (0.0324) & 9 + 8\\
                & LS        & 0.4125 (0.1079) & 0.3439 (0.0631) & 0.3089 (0.0839) & 0.2926 (0.0913) & 0.2778 (0.0923) & 13\\
                & LS+CFSP   & 0.3258 (0.1236) & 0.1894 (0.1126) & 0.1274 (0.0986) & 0.0651 (0.0315) & 0.0618 (0.0324) & 13 + 9\\
                & CFSP      & \textbf{0.0518 (0.0226)} & \textbf{0.0327 (0.0104)} & \textbf{0.0318 (0.0094)} & \textbf{0.0263 (0.0082)} & \textbf{0.0104 (0.0038)} & 58\\
\hline
Cit-HepTh      & LRW       & 0.5052 (0.2208) & 0.4697 (0.2010) & 0.4373 (0.1962) & 0.4067 (0.1998) & 0.3807 (0.2224) & 15\\
(27400,352021)  & LRW+CFSP  & \textbf{0.3888 (0.2261)} & \textbf{0.3249 (0.2072)} & 0.2960 (0.1778) & 0.2528 (0.1689) & 0.2476 (0.1928) & 15 + 368\\
                & LS        & 0.5430 (0.2617) & 0.5099 (0.2524) & 0.4737 (0.2586) & 0.4290 (0.2773) & 0.3997 (0.2834) & 175\\
                & LS+CFSP   & 0.4496 (0.2848) & 0.3585 (0.2185) & 0.3122 (0.2138) & 0.2074 (0.0814) & 0.1772 (0.0782) & 175 + 190\\
                & CFSP      & 0.4693 (0.2676) & 0.3732 (0.2166) & \textbf{0.2683 (0.1494)} & \textbf{0.1748 (0.0683)} & \textbf{0.0752 (0.0233)} & 3704\\
\hline
Cit-HepPh      & LRW       & 0.1784 (0.0541) & 0.1466 (0.0503) & 0.1234 (0.0256) & 0.1079 (0.0120) & 0.1048 (0.0062) & 19\\
(34401,420784)  & LRW+CFSP  & 0.1365 (0.0305) & 0.1132 (0.0201) & 0.1070 (0.0181) & 0.0966 (0.0135) & 0.0948 (0.0052) & 19 + 219\\
                & LS        & 0.1720 (0.0055) & 0.1292 (0.0224) & 0.1155 (0.0147) & 0.1107 (0.0062) & 0.1078 (0.0007) & 103\\
                & LS+CFSP   & 0.1335 (0.0064) & \textbf{0.1064 (0.0114)} & \textbf{0.0965 (0.0091)} & 0.0944 (0.0061) & 0.0916 (0.0011) & 103 + 102\\
                & CFSP      & \textbf{0.1181 (0.0143)} & 0.1127 (0.0101) & 0.1109 (0.0089) & \textbf{0.0928 (0.0039)} & \textbf{0.0913 (0.0015)} & 2666\\
\hline
amazon0302     & LRW       & 0.1768 (0.0833) & 0.1465 (0.0749) & 0.1336 (0.0601) & 0.1221 (0.0504) & 0.1120 (0.0429) & 336\\
(262111,899792) & LRW+CFSP  & 0.1072 (0.0666) & 0.0724 (0.0455) & 0.0577 (0.0419) & 0.0423 (0.0373) & 0.0344 (0.0294) & 336 + 608\\
                & LS        & 0.2662 (0.1204) & 0.2496 (0.1155) & 0.2247 (0.1021) & 0.2066 (0.0892) & 0.1946 (0.0840) & 5765\\
                & LS+CFSP   & 0.1775 (0.0807) & 0.1248 (0.0643) & 0.0923 (0.0675) & 0.0878 (0.0694) & 0.0641 (0.0435) & 5765 + 458\\
                & CFSP      & \textbf{0.0194 (0.0063)} & \textbf{0.0095 (0.0043)} & \textbf{0.0072 (0.0031)} & \textbf{0.0056 (0.0024)} & \textbf{0.0050 (0.0022)} & 3007\\
\hline
amazon0505     & LRW       & 0.2472 (0.1112) & 0.2369 (0.1124) & 0.2249 (0.1132) & 0.2200 (0.1152) & 0.2163 (0.1183) & 210\\
(410236,2439437)& LRW+CFSP  & 0.1058 (0.0833) & 0.0636 (0.0319) & 0.0636 (0.0319) & 0.0636 (0.0319) & 0.0610 (0.0337) & 210 + 2061\\
                & LS        & 0.4124 (0.1751) & 0.3704 (0.1864) & 0.3653 (0.1878) & 0.3576 (0.1919) & 0.3529 (0.1956) & 20558\\
                & LS+CFSP   & 0.1300 (0.0935) & 0.0903 (0.0545) & 0.0782 (0.0587) & 0.0782 (0.0587) & 0.0782 (0.0587) & 20558 + 2900\\
                & CFSP      & \textbf{0.0227 (0.0076)} & \textbf{0.0116 (0.0089)} & \textbf{0.0058 (0.0020)} & \textbf{0.0048 (0.0011)} & \textbf{0.0047 (0.0008)} & 13171\\
\end{tabular}

\end{sc}
\end{tiny}
\end{center}
\vskip -0.1in
\end{table*}
\begin{figure*}[hbtp]
\vskip -0.18in
\caption{Different machine learning communities detected by our algorithm for the highlighted seeds. \textit{Left:} Learning Theory  \textit{Middle:} Sparsity \textit{Right:} Kernels }
\label{fig:results_dblp}
\includegraphics[width=0.315\linewidth]{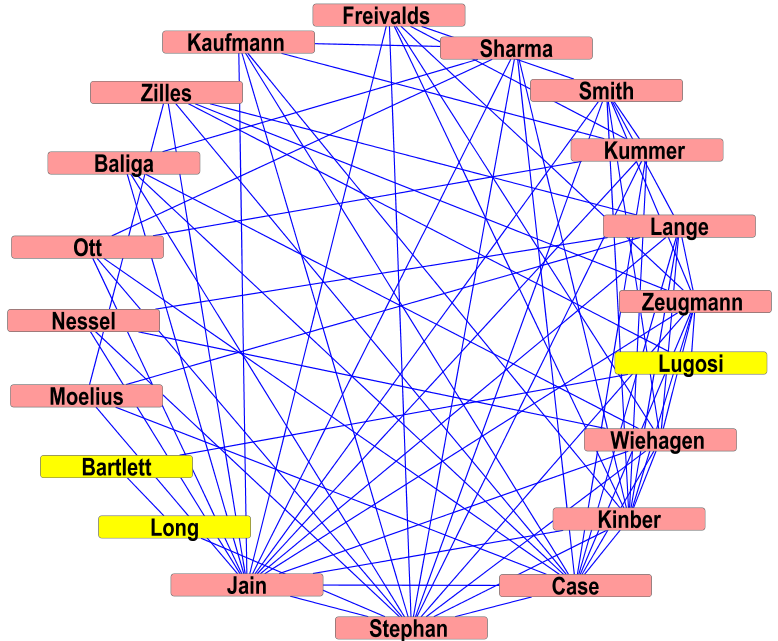}
\hfill
\includegraphics[width=0.315\linewidth]{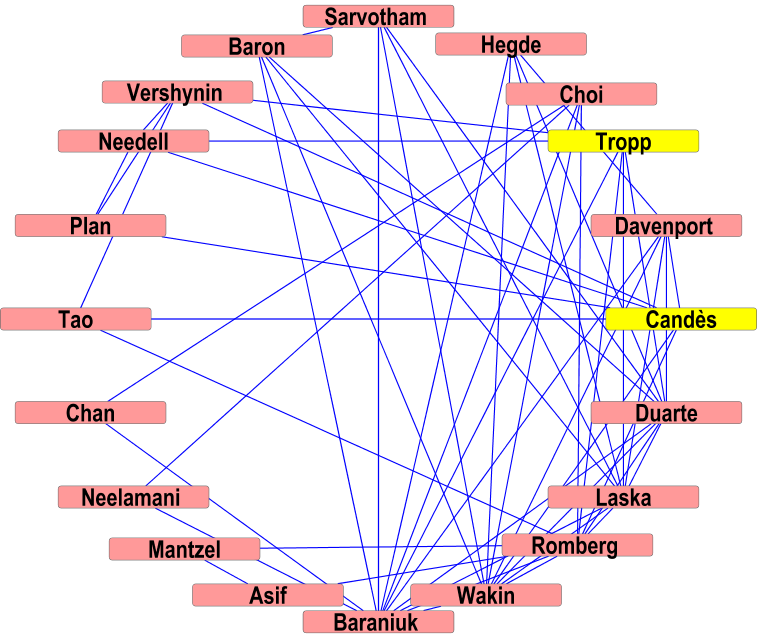}
\hfill
\includegraphics[width=0.315\linewidth]{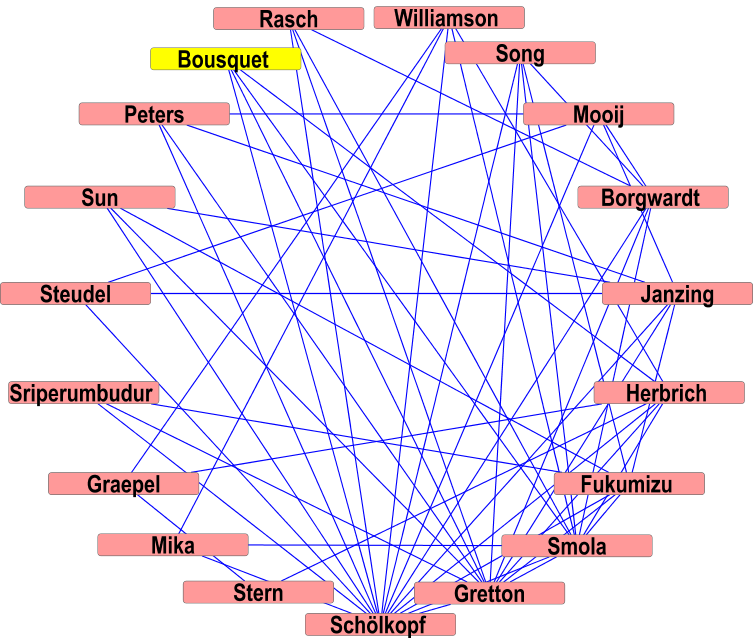}
\vskip -2mm
\end{figure*}

\section{Experiments}\label{sec:experiments}
We 
empirically evaluate the performance of our approach on local clustering and 
  community detection problems.
  Our goal 
  is to address the following questions:
  (i) In terms of the original 
   	objective of the fractional set program, how does the locally optimal solution of our tight relaxation compare to the globally optimal solution of a loose relaxation?
  (ii) How good is our quality guarantee (Theorem \ref{th:quality-guarantee}), i.e.\;how often does our method improve a given sub-optimal solution obtained by another method?
   
  In all experiments we start the RatioDCA with 10
  different random initializations and report the result 
  with smallest objective value. Regarding the parameter $\gamma$ from Theorem \ref{th:setratio_cnstr}, it turns out that best results are obtained by first solving the unconstrained case $(\gamma=0)$ and then increasing $\gamma$ sequentially, until all constraints are fulfilled. In principle, this strategy could also be used to deal with soft or noisy constraints,
  however we focus here on the case of hard constraints.
  
 \paragraph{Local clustering.} 
 	We first consider the local normalized cut problem,
	\begin{align}\label{eq:normalized_cut}
		\min_{\substack{C \subset V\\ s \in C,\ \vol_d(C) \le k} } \frac{\cut(C,\overline{C}) \vol{(V)}}{\vol_d(C)\vol_d(\overline{C})},\ 
	 \end{align}
where $s \in V$ is a given seed vertex. We evaluate our approach (denoted as CFSP) against the Local Spectral (LS) method
         by \citet{MahOreVis12} and the Lazy Random Walk (LRW) by \citet{AndLan06} 
	 on large social networks of the Stanford Large Network Dataset Collection \cite{LeskovecSnap}. 
	 
\begin{table*}[hbt!]
\caption{Constrained local normalized Cheeger cuts of the solutions 
obtained by our method (note that we optimized the normalized cut) 
as well as the solutions of Lazy Random Walk (LRW) where we threshold in each step according to the normalized Cheeger cut objective}
\label{tab:results_ncc_long}
\vspace{-1mm}
\begin{center}
\begin{tiny}
\begin{sc}
\begin{tabular}{cccccccc}
                 & Method    & $\le 20\%$      & $\le 40\%$      & $\le 60\%$      & $\le 80\%$      & $\le 100 \%$    & Runtime (sec)\\
\hline
CA-GrQc         & LRW       & 0.1298 (0.0677) & 0.0992 (0.0536) & 0.0967 (0.0537) & 0.0894 (0.0418) & 0.0753 (0.0340) & 1\\
                & CFSP      & \textbf{0.0312 (0.0289)} & \textbf{0.0153 (0.0128)} & \textbf{0.0133 (0.0110)} & \textbf{0.0079 (0.0051)} & \textbf{0.0064 (0.0040)} & 31\\
\hline
CA-HepTh        & LRW       & 0.2601 (0.0911) & 0.2150 (0.0530) & 0.2005 (0.0495) & 0.1941 (0.0488) & 0.1873 (0.0481) & 1\\
                & CFSP      & \textbf{0.0517 (0.0225)} & \textbf{0.0326 (0.0104)} & \textbf{0.0317 (0.0093)} & \textbf{0.0261 (0.0082)} & \textbf{0.0103 (0.0037)} & 58\\
\hline
Cit-HepTh       & LRW       & 0.4967 (0.2300) & 0.4565 (0.2150) & 0.4179 (0.2174) & 0.3890 (0.2174) & 0.3705 (0.2307) & 10\\
                & CFSP      & \textbf{0.4673 (0.2690)} & \textbf{0.3712 (0.2176)} & \textbf{0.2661 (0.1496)} & \textbf{0.1681 (0.0706)} & \textbf{0.0705 (0.0150)} & 3704\\
\hline
Cit-HepPh       & LRW       & 0.1574 (0.0497) & 0.1104 (0.0364) & \textbf{0.0769 (0.0151)} & 0.0573 (0.0064) & \textbf{0.0566 (0.0062)} & 14\\
                & CFSP      & \textbf{0.1168 (0.0156)} & \textbf{0.1067 (0.0138)} & 0.0986 (0.0202) & \textbf{0.0500 (0.0098)} & 0.0584 (0.0049) & 2666\\
\hline
amazon0302      & LRW       & 0.1768 (0.0833) & 0.1464 (0.0749) & 0.1335 (0.0600) & 0.1220 (0.0503) & 0.1118 (0.0428) & 241\\
                & CFSP      & \textbf{0.0193 (0.0063)} & \textbf{0.0095 (0.0043)} & \textbf{0.0072 (0.0031)} & \textbf{0.0056 (0.0024)} & \textbf{0.0050 (0.0022)} & 3007\\
\hline
amazon0505      & LRW       & 0.2472 (0.1111) & 0.2369 (0.1124) & 0.2248 (0.1132) & 0.2200 (0.1152) & 0.2162 (0.1183) & 289\\
                & CFSP      & \textbf{0.0227 (0.0076)} & \textbf{0.0116 (0.0089)} & \textbf{0.0058 (0.0020)} & \textbf{0.0048 (0.0011)} & \textbf{0.0047 (0.0008)} & 13171\\
\end{tabular}

\end{sc}
\end{tiny}
\end{center}
\vskip -0.2in
\end{table*}
	 
	In \citet{MahOreVis12}, a spectral-type relaxation is derived for \eqref{eq:normalized_cut} that can be 
	solved globally optimally. The resulting continuous solution is then transformed into a set via 
	optimal thresholding. However, contrary to our method this is not guaranteed
	to yield a set that satisfies 
	both the seed and volume constraints. 
	Hence \citet{MahOreVis12} suggest, at the cost of losing 
	their 
	approximation guarantees,
	 to perform \textit{constrained} optimal thresholding which 
	considers only thresholds that yield feasible sets. 
In a recent generalization of their work, \citet{HanMah2012} compute a sequence of locally-biased eigenvectors, the first of which corresponds to the solution of the spectral-type relaxation of \citet{MahOreVis12}. We use the code of 
 \citet{HanMah2012} to compute the solution of LS in our experiments.
	The local clustering technique of \citet{AndLan06} 
	 explores the graph locally by performing a lazy random walk with the transition matrix $M= \frac{1}{2} \left(I + WD^{-1}\right)$, where $D$ is the degree matrix of the graph and the initial distribution is concentrated on the seed set. 
Under some conditions on the seed set, it is shown that after a specified number of steps optimal thresholding of the random walk vector yields a set with ``good'' normalized Cheeger cut.
 However, they cannot guarantee 
 that the resulting set contains the seed. For a fair comparison, we compute the full sequence of random walk vectors until the stationary distribution is reached, and in each step perform constrained optimal thresholding according to the normalized cut 
 objective.  

	For each dataset we generate 10 random seeds.
In order to ensure that meaningful intervals for the volume constraint are explored, 
	we first 
	solve the local clustering problem only with the seed constraint. 
	Treating this as the ``unconstrained'' solution $C_{0}$, 
    we then repeat the experiment with upper bounds of the form  $\vol(C)\leq \alpha \vol(C_0)$, where $\alpha \in \{0.2,0.4,0.6,0.8\}$.
	
Table \ref{tab:results_ncut_short} shows mean and standard deviation of the normalized cut values averaged over the 10 different random trials (seeds) 
and average runtime over the different runs and volume constraints. To demonstrate the quality guarantee (Theorem \ref{th:quality-guarantee}) 
	we also initialize CFSP
	with the solution of LS and LRW. Our method CFSP consistently outperforms the competing methods by large margins 
	and always finds solutions that satisfy all constraints. 
       In some cases CFSP initialized with LS or LRW outperforms CFSP with 10 random initializations. While LRW is very fast,
       the obtained normalized cuts are far from being competitive. 
Note that 
CFSP still performs better  
       if one uses for the optimal thresholding the normalized Cheeger cut for which LRW has been designed.
This is shown in Table \ref{tab:results_ncc_long} where we compare the normalized Cheeger cut of our solutions (note that we optimized the normalized cut) to the solution obtained by the Lazy Random Walk method where we threshold in each step according to the normalized Cheeger cut objective.

\paragraph{Community detection.}
   We evaluate our approach for local community detection according to \eqref{eq:maxdens_general_only_upper}.
       The task is to extract communities around given seed sets in a co-author network constructed from the
       DBLP publication database. Each node in the network represents a researcher and an edge between two nodes indicates
       a common publication. The weights of the graph are defined as
	$w_{ij}=\sum_{l \in P_i \cap P_j} \frac{1}{\abs{A_l}}$, where $P_i,P_j$ denotes the set of publications of authors $i$ and $j$ and $A_l$ denote the sets of authors for publication $l$, i.e.\,the weights represent the total contribution to shared papers.
	This normalization avoids the problem of giving high weight to a researcher who 
	has 
	publications 
	that have a large number of authors,
	which usually does not reflect close collaboration with all co-authors.
	
	To avoid finding a trivial densely connected group of researchers with 
	few connections to the rest of the authors,
	we further restrict the graph by considering only authors 
	with 
	at least two 
	publications and maximum distance two from the seed set. 
        As volume function in \eqref{eq:maxdens_general_only_upper}, we use the volume
        of the original graph in order to further enforce densely connected components.
     
     We perform local community detection  
        with the size constraint $|C| \leq 20$ and three different 
        seed sets $J_1=\left\{\text{P. Bartlett}\right.$, P. Long,  $\left.\text{G. Lugosi}\right\}$, $J_2=\left\{\text{E. Candes}\right.$, $\left.\text{J. Tropp}\right\}$ and 
$J_3=\left\{\text{O. Bousquet}\right\}$. 
$J_1$ consists of well-known researchers in learning theory, and all members of the detected community work in this area.
To validate this, we counted the number of publications in the two main theory conferences COLT and ALT.
On average each author has 18.2 publications in these two conferences
(see Table \ref{tab:theory_pubs} for more details).
\begin{table}
\caption{The number of publications in ALT and COLT of each author in the ``learning theory'' community found}
\label{tab:theory_pubs}
\begin{center}
\begin{tabular}{|c|c|c|}
\hline
	Author		&	COLT &	ALT\\
\hline			
    Sandra Zilles &		3 &	13\\
    Peter L. Bartlett &		24 &	2\\
    Carl H. Smith		&13&	4\\
    Philip M. Long		&21	&3\\
    John Case			&12	&18\\
    Sanjay Jain		&21&	40\\
    Steffen Lange	&	14&	5\\
    Rolf Wiehagen	&	6	&7\\
    Thomas Zeugmann &		6&	20\\
    Rusins Freivalds	&	6&	5\\
    Efim B. Kinber	&	11&	9\\
    Frank Stephan	&	13	&28\\
    Martin Kummer	&	5&	0\\
    Arun Sharma	&	10&	13\\
    Samuel E. Moelius  &		1&	5\\
    Gabor Lugosi	&	16&	1\\
    Matthias Ott	 &	2	&1\\
    Jochen Nessel	&	1&	2\\
    Susanne Kaufmann  &		1	&1\\
    Ganesh Baliga	&	1&	0\\
    \hline
    \end{tabular}
\end{center}
\end{table}
The seeds $J_{2}$ yield a community of key scientists in the field of sparsity such as 
T. Tao, R. Baraniuk, J. Romberg, M. Wakin, R. Vershynin etc. The third community 
contains researchers
 who either are/were members of the group of B. Sch\"olkopf or have closely collaborated with his group. 
 
\section*{Acknowledgements}
This work has been supported by DFG Excellence Cluster
MMCI and ERC Starting Grant NOLEPRO.

\bibliographystyle{icml2013}
\bibliography{literatur}

\end{document}